\newcommand{\remark}[1]{\textcolor{red}{\em #1}}
\renewcommand{\epsilon}{\varepsilon}
\newcommand{\R}{\mathds{R}}
\newcommand{\N}{\mathds{N}}
\newtheorem{theorem}{Theorem}
\newtheorem{definition}[theorem]{Definition}
\newtheorem{lemma}[theorem]{Lemma}
\newtheorem{corollary}[theorem]{Corollary}
\newcommand{\ignore}[1]{}
\newcommand{\oneoneea}{(1+1)~EA\xspace}
\newcommand{\oneoneeas}{(1+1)~EA*\xspace}
\newcommand{\wrt}{w.\,r.\,t.\xspace}
\newcommand{\ie}{i.\,e.\xspace}
\newcommand{\eg}{e.\,g.\xspace}
\newcommand{\LO}{\textsc{Leading\-Ones}\xspace}
\newcommand{\BV}{\textsc{BinVal}\xspace}
\newcommand{\ONEMAX}{\textsc{OneMax}\xspace}
\newcommand{\oneant}{1\nobreakdash-ANT\xspace}
\newcommand{\MMAS}{MMAS\xspace}
\newcommand{\MMASs}{MMAS*\xspace}
\newcommand{\tmin}{\tau_{\mathrm{min}}}
\newcommand{\set}[2]{\{#1 \; | \; #2\}}
\newcommand{\eqnComment}[2]{\underset{{\scriptstyle \text{#1}}}{#2}}
\begin{document}

\title{Simple Max-Min Ant Systems and the Optimization of~Linear Pseudo-Boolean Functions}

\author{Timo K{\"o}tzing\\
Algorithms and Complexity\\
Max Planck Institute for Informatics\\
 Saarbr{\"u}cken, Germany
\and Frank Neumann\\
Algorithms and Complexity\\
Max Planck Institute for Informatics\\
 Saarbr{\"u}cken, Germany
\and Dirk Sudholt\\
International Computer Science Institute\\
 Berkeley, CA, USA
\and Markus Wagner\\
Algorithms and Complexity\\
Max Planck Institute for Informatics\\
 Saarbr{\"u}cken, Germany
} 

\maketitle

\begin{abstract}
With this paper, we contribute to the understanding of ant colony optimization (ACO) algorithms by formally analyzing their runtime behavior. We study simple MAX-MIN ant systems on
the class of linear pseudo-Boolean functions defined on binary strings of length $n$.
Our investigations point out how the progress according to function values is stored in pheromone.
We provide a general upper bound of $O((n^3 \log n)/ \rho)$ for two ACO variants on all linear functions, where $\rho$ determines the pheromone update strength.
Furthermore, we show improved  bounds for two well-known linear pseudo-Boolean functions called \ONEMAX and \BV and give additional insights using an experimental study.
\end{abstract}

\newpage

\section{Introduction}

Ant colony optimization (ACO) is an important class of stochastic search algorithms that has found many applications in combinatorial optimization as well as for stochastic and dynamic problems~\cite{DorigoStuetzleACOBook}.
The basic idea behind ACO is that ants construct new solutions for a given problem by carrying out random walks on a so-called construction graph. These random walks are influenced by the pheromone values that are stored along the edges of the graph. During the optimization process the pheromone values are updated according to good solutions found during the optimization which should lead to better solutions in further steps of the algorithm.

Building up a theoretical foundation of this kind of algorithms is a challenging task as these algorithms highly rely on random decisions. The construction of new solutions highly depends on the current pheromone situation in the used system which highly varies during the optimization run. Capturing the theoretical properties of the pheromone constellation is a hard task but very important to gain new theoretical insights into the optimization process of ACO algorithms.

With this paper, we contribute the theoretical understanding of ACO algorithms. Our goal is to gain new insights into the optimization process of these algorithms by studying them on the class of linear pseudo-Boolean functions. There are investigations of different depths on the behavior of simple evolutionary algorithms for this class of functions. The main result shows that each linear pseudo-Boolean function is optimization in expected time $O(n\log n)$ by the well known \oneoneea~\cite{DJWoneone,He2004,DBLP:conf/ppsn/Jagerskupper08}.

With respect to ACO algorithms, initial results on simplified versions of the MAX-MIN ant system \cite{Stutzle2000} have been obtained.
These studies deal with specific pseudo-Boolean functions defined on binary strings of length $n$. Such studies are primary focused on well-known linear example functions called \ONEMAX and \BV or the function \LO \cite{NeumannWittAlgorithmica09,Gutjahr2008a,DNSWLeading,DBLP:journals/swarm/NeumannSW09,NeumannSW10}.
Recently, some results for ACO algorithms on combinatorial optimization problems such as minimum spanning trees~\cite{NeumannW07} or the traveling salesman~\cite{AntsTsp10} have been obtained. These analyses assume that the pheromone bounds are attained in each iteration of the algorithms. This is the case if the pheromone update if a MAX-MIN ant system uses a strong pheromone update which forces the pheromone only to take on the maximum and minimum value. The analyses presented in \cite{NeumannW07}  and \cite{AntsTsp10} do not carry over to smaller pheromone updates. In particular, there are no corresponding polynomial upper bounds if the number of different function values is exponential with respect to the given input size.

We provide new insights into the optimization of MAX-MIN ant systems for smaller pheromone updates on functions that may attain exponentially many functions values.
Our study investigates simplified versions of the MAX-MIN ant system called \MMASs and \MMAS \cite{DBLP:journals/swarm/NeumannSW09} on linear pseudo-Boolean functions with non-zero weights. For these algorithms, general upper bounds of $O((n+ (\log n)/ \rho)D)$ and $O(((n^2 \log n) / \rho)D)$ respectively, have been provided for unimodal functions attaining $D$ different function values~\cite{DBLP:journals/swarm/NeumannSW09}. As linear pseudo-Boolean function are unimodal, these bounds carry over to this class of functions. However, they only give weak bounds for linear pseudo-Boolean functions attaining many function values (\eg for functions where the number of different function values is exponential in $n$).

We show an upper bound of $O((n^3 \log n)/ \rho)$ for \MMASs and \MMAS optimizing each linear pseudo-Boolean function. Furthermore, our studies show that the method of fitness-based partitions may also be used according to pheromone value, \ie, MAX-MIN ant systems store the progress made according to function values quickly into a pheromone potential. This is one of the key observations that we use for our more detailed analyses on \ONEMAX and \BV in which we improve the results presented in \cite{DBLP:journals/swarm/NeumannSW09}.

To provide further insights that are not captured by our theoretical analyses, we carry out an experimental study. Our experimental investigations give comparisons to simple evolutionary algorithms, and consider the impact of the chosen weights of the linear functions and pheromone update strength with respect to the optimization time. One key observation of these studies is that \ONEMAX is not the simplest linear function for the simple MAX-MIN ant systems under investigation. Additionally, the studies indicate that the runtime grows at most linearly with $1/\rho$ for a fixed value of $n$.

We proceed as follows. In Section~\ref{sec:smmas}, we introduce the simplified MAX-MIN ant systems that are subject to our investigations. In Section~\ref{sec:ACOOnLinearFunctions}, we provide general runtime bounds for the class of linear pseudo-Boolean functions and present specific results for \ONEMAX and \BV in Section~\ref{sec:impbounds}. Our experimental study which provides further insights is reported in Section~\ref{sec:exp}. Finally, we discuss our results and finish with some concluding remarks.

\section{Simplified MAX-MIN Ant Systems}
\label{sec:smmas}

We first describe the simplified MAX-MIN ant systems that will be investigated in the sequel.
The following construction graph is used to construct solutions for pseudo-Boolean optimization, \ie, bit strings of $n$ bits. It is based on a directed multigraph $C = (V, E)$.
In addition to a start node $v_0$, there is a node $v_i$ for every bit $i$, $1 \le i \le n$. This node can be reached from $v_{i-1}$ by two edges. The edge $e_{i,1}$ corresponds to setting bit $i$ to 1, while $e_{i,0}$ corresponds to setting bit $i$ to 0. The former edge is also called a \emph{1-edge}\index{1-edge}, the latter is called \emph{0-edge}\index{0-edge}. An example of a construction graph for $n=5$ is shown in Figure~\ref{fig:construction-multigraph}.

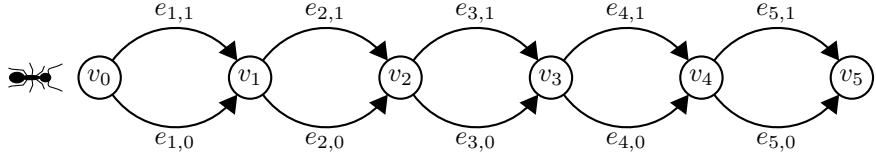
\begin{figure*}
\begin{center}
\begin{tikzpicture}[scale=2]
\tikzstyle{antbody}=[fill=black,draw=black];
\tikzstyle{antleg}=[black,rounded corners=0pt];
\begin{scope}[shift={(-0.55,0)},scale=0.2]
\draw[antleg] (6pt+5pt,0pt) -- ++(-5pt,+7pt) -- ++(-11pt,+1pt) -- ++(-3pt,+1pt);
\draw[antleg] (6pt+8pt,0pt) -- ++(+1pt,+7pt) -- ++(-3pt,+7pt);
\draw[antleg] (6pt+10pt,0pt) -- ++(+8pt,+6pt) -- ++(+2pt,+5pt);
\draw[antleg,rounded corners=1pt] (6pt-2pt+2*9pt-1pt+6pt+4pt,0pt) -- ++(-2pt,+8pt) -- ++(+10pt,+3pt) -- ++(+4pt,+2pt);
\draw[antleg] (6pt+5pt,0pt) -- ++(-5pt,-7pt) -- ++(-11pt,-1pt) -- ++(-3pt,-1pt);
\draw[antleg] (6pt+8pt,0pt) -- ++(+1pt,-7pt) -- ++(-3pt,-7pt);
\draw[antleg] (6pt+10pt,0pt) -- ++(+8pt,-6pt) -- ++(+2pt,-5pt);
\draw[antleg,rounded corners=1pt] (6pt-2pt+2*9pt-1pt+6pt+4pt,0pt) -- ++(-2pt,-8pt) -- ++(+10pt,-3pt) -- ++(+4pt,-2pt);
\filldraw[antbody] (6pt-2pt+9pt,0) ellipse (9pt and 2pt);
\filldraw[antbody] (0,0) ellipse (7pt and 5pt);
\filldraw[antbody] (6pt-2pt+2*9pt-1pt+6pt,0) ellipse (5pt and 4pt);
\end{scope}
\tikzstyle{node}=[black,fill=white,thick];
\tikzstyle{edge}=[black,thick,-triangle 60];
\foreach \x in {1,2,3,4,5} {
    \draw[edge,shorten >=8pt,shorten <=8pt] (\x-1,0) .. controls (\x-1+0.3,0.4) and (\x-1+0.7,0.4) .. (\x,0) node[pos=0.5,above] {$e_{\x,1}$};
    \draw[edge,shorten >=8pt,shorten <=8pt] (\x-1,0) .. controls (\x-1+0.3,-0.4) and (\x-1+0.7,-0.4) .. (\x,0) node[pos=0.5,below] {$e_{\x,0}$};
}
\foreach \x in {0,1,2,3,4,5} {
    \filldraw[node] (\x,0) circle (4pt) node {$v_{\x}$};
}
\end{tikzpicture}
\end{center}
\caption{Construction graph for pseudo-Boolean optimization with $n=5$ bits.}
\label{fig:construction-multigraph}
\end{figure*}

In a solution construction process an artificial ant sequentially traverses the nodes $v_0, v_1, \dots, v_n$.
The decision which edge to take is made according to pheromones on the edges. Formally, we denote pheromones by a function $\tau \colon E \to \R_0^+$. From $v_{i-1}$ the edge $e_{i,1}$ is then taken with probability $\tau(e_{i,1})/(\tau(e_{i,0}) + \tau(e_{i,1}))$.
In the case of our construction graph, we identify the path taken by the ant with a corresponding binary solution $x$
as described above and denote the path by $P(x)$.

All ACO algorithms considered here start with an equal
amount of pheromone on all edges: $\tau(e_{i, 0}) = \tau(e_{i,1}) = 1/2$.
Moreover, we ensure that ${\tau(e_{i, 0}) + \tau(e_{i,1}) = 1}$ holds, \ie,
pheromones for one bit always sum up to 1. This implies that the probability of taking a specific edge equals its pheromone value; in other words, pheromones and traversion probabilities coincide.

Given a solution $x$ and a path $P(x)$ of edges that have been chosen in the creation of~$x$, a pheromone update with respect to~$x$ is performed as follows. First, a $\rho$-fraction of all pheromones evaporates and a $(1-\rho)$-fraction remains. Next, some pheromone is added to edges that are part of the path $P(x)$ of $x$.
To prevent pheromones from dropping to arbitrarily small values, we follow the MAX-MIN ant system by St{\"u}tzle and Hoos~\cite{Stutzle2000} and restrict all pheromones to a bounded interval. The precise interval is chosen as $\left[1/n, 1- 1/n\right]$. This choice is inspired by standard mutations in evolutionary computation where for every bit an evolutionary algorithm has a probability of $1/n$ of reverting a wrong decision.

Depending on whether an edge $e$ is contained in the path~$P(x)$ of
the solution $x$, the pheromone values $\tau$ are updated to
$\tau'$ as follows:
\begin{equation*}
\label{eq:pheromone-update-formula}
\begin{array}{r@{}r@{}l@{}l}
 \tau'(e) =& \min\;& \displaystyle \left\{ (1-\rho)\cdot \tau(e) +
    \rho,\ 1- \frac{1}{n} \right\} & ~~\textnormal{if $e \in
P(x)$ and}\\
 \tau'(e) =& \max\;& \displaystyle \left\{ (1-\rho)\cdot
  \tau(e),\ \frac{1}{n} \right\} & ~~\textnormal{if $e \notin
P(x)$.}
\end{array}
\end{equation*}

The algorithm \MMAS now works as follows. It records the best solution found so far, known as best-so-far solution. It repeatedly constructs a new solution. This solution is then compared against the current best-so-far and it replaces the previous best-so-far if the objective value of the new solution is not worse. Finally, the pheromones are updated with respect to the best-so-far solution. A formal description is given in Algorithm~\ref{alg:MMAS}.

Note that when a worse solution is constructed then the old best-so-far solution is reinforced again. In case no improvement is found for some time, this means that the same solution $x^*$ is reinforced over and over again and the pheromones move towards the respective borders in~$x^*$. Previous studies~\cite{Gutjahr2008a} have shown that after $(\ln n)/\rho$ iterations of reinforcing the same solution all pheromones have reached their respective borders. This time is often called ``freezing time''~\cite{DBLP:journals/swarm/NeumannSW09}.

\renewcommand{\algorithmicloop}{\textbf{repeat forever}}
\begin{algorithm}[ht]
    \caption{\MMAS}
    \algsetup{indent=1.5em}
    \begin{algorithmic}[1]
        \STATE Set $\tau_{(u,v)} = 1/2$ for all $(u,v) \in E$.
        \STATE Construct a solution $x^*$.
        \STATE Update pheromones \wrt{} $x^*$.
        \LOOP
            \STATE Construct a solution $x$.
            \STATE \textbf{if} $f(x) \ge f(x^*)$ \textbf{then} $x^*:=x$.
            \STATE Update pheromones \wrt{} $x^*$.
        \ENDLOOP
    \end{algorithmic}
    \label{alg:MMAS}
\end{algorithm}

We also consider a variant of \MMAS known as \MMASs~\cite{DBLP:journals/swarm/NeumannSW09} (see Algorithm~\ref{alg:MMASs}). The only difference is that the best-so-far solution is only changed in case the new solution is strictly better. In application of ACO often this kind of strategy is used. However, in~\cite{DBLP:journals/swarm/NeumannSW09} it was argued that \MMAS works better on functions with plateaus as \MMAS is able to perform a random walk on equally fit solutions.

\begin{algorithm}[ht]
    \caption{\MMASs}
    \algsetup{indent=1.5em}
    \begin{algorithmic}[1]
        \STATE Set $\tau_{(u,v)} = 1/2$ for all $(u,v) \in E$.
        \STATE Construct a solution $x^*$.
        \STATE Update pheromones \wrt{} $x^*$.
        \LOOP
            \STATE Construct a solution $x$.
            \STATE \textbf{if} $f(x) > f(x^*)$ \textbf{then} $x^*:=x$.
            \STATE Update pheromones \wrt{} $x^*$.
        \ENDLOOP
    \end{algorithmic}
    \label{alg:MMASs}
\end{algorithm}

In the following we analyze the performance of \MMAS and \MMASs on linear functions. We are interested in the number of iterations until the first global optimum is found. This time is commonly called optimization time.

Note that the pheromones on the 1-edges $e_{\cdot,1}$ suffice to describe all pheromones as $\tau(e_{i, 0}) + \tau(e_{i, 1}) = 1$. When speaking of pheromones, we therefore often focus on pheromones on 1-edges. 

\section{General Results}
\label{sec:ACOOnLinearFunctions}

We first derive general upper bounds on the expected optimization time of \MMAS and \MMASs on linear pseudo-Boolean functions. A linear pseudo-Boolean function for an input vector $x=(x_1,\ldots,x_n)$ is a function $f\colon\left\{0,1\right\}^{n}\mapsto\mathbb{R}$, with $f(x)=\sum^{n}_{i=1}w_{i}x_{i}$ and \emph{weights} $w_{i}\in\mathbb{R}$. 

We only consider positive weights since a function with a negative weight $w_i$ may be transformed into a function with a positive weight $w_i' = -w_i$ by exchanging the meaning of bit values $0$ and $1$ for bit~$i$. This results in a function whose value is by an additive term of $w_i'$ larger. This and exchanging bit values does not impact the behavior of our algorithms.
We also exclude weights of 0 as these bits do not contribute to the fitness.
Finally, in this section we assume without loss of generality that the weights are ordered according to their values: $w_{1} \geq w_{2} \geq \ldots \geq w_{n}$.

Two well-known linear functions are the function \ONEMAX where $w_1 = w_2 = \dots = w_n = 1$ and the function \BV where $w_i = 2^{n-i}$. These functions represent two extremes: for \ONEMAX all bits are of equal importance, while in \BV a bit at position~$i$ can dominate all bits at positions $i+1, \dots, n$.

\subsection{Analysis Using Fitness-based Partitions}
\label{sec:FitnessBasedPartitions}

We exploit a similarity between \MMAS, \MMASs and evolutionary algorithms to obtain a first upper bound.
We use the method of fitness-based partitions, also called fitness-level method, to estimate the expected optimization time. This method has originally been introduced for the analysis of elitist evolutionary algorithms (see, \eg, Wegener~\cite{Wegener2002}) where the fitness of the current search point can never decrease. The idea is to partition the search space into sets $A_1, \dots, A_m$ that are ordered with respect to fitness. Formally, we require that for all $1 \le i \le m-1$ all search points in $A_i$ have a strictly lower fitness than all search points in $A_{i+1}$. In addition, $A_m$ must contain all global optima.

Now, if $s_i$ is (a lower bound on) the probability of discovering a new search point in $A_{i+1} \cup \dots \cup A_m$, given that the current best solution is in $A_i$, the expected optimization time is bounded by $\sum_{i=1}^{m-1} 1/s_i$ as $1/s_i$ is (an upper bound on) the expected time until fitness level~$i$ is left and each fitness level has to be left at most once.

Gutjahr and Sebastiani~\cite{Gutjahr2008a} as well as Neumann, Sudholt, and Witt~\cite{DBLP:journals/swarm/NeumannSW09} have adapted this method for \MMASs. If the algorithm does not find a better search point for some time, the same solution $x^*$ is reinforced over and over again, until eventually all pheromones attain their borders corresponding to the bit values in $x^*$. We say that then all pheromones are saturated. In this setting the solution creation process of \MMASs equals a standard bit mutation of $x^*$ in an evolutionary algorithm. If $s_i$ is (a lower bound on) the probability that the a mutation of $x^*$ creates a search point in $A_{i+1} \cup \dots \cup A_m$, then the expected time until \MMASs leaves fitness level $A_i$ is bounded by $(\ln n)/\rho + 1/s_i$ as either the algorithm manages to find an improvement before the pheromones saturate or the pheromones saturate and the probability of finding an improvement is at least $s_i$.
This results in an upper bound of $m \cdot (\ln n)/\rho + \sum_{i=1}^{m-1} 1/s_i$ for \MMASs.

One restriction of this method is, however, that fitness levels are only allowed to contain a single fitness value; in the above bound $m$ must equal the number of different fitness values. Without this condition---when a fitness level contains multiple fitness values---\MMASs may repeatedly exchange the current best-so-far solution within a fitness level. This can prevent the pheromones from saturating, so that the above argument breaks down. For this reason, all upper bounds in~\cite{DBLP:journals/swarm/NeumannSW09} grow at least linearly in the number of function values.

The following lemma gives an explanation for the time bound $(\ln n)/\rho$ for saturating pheromones. This time is also called \emph{freezing time}. We present a formulation that holds for arbitrary sets of bits. Though we do not make use of the larger generality, this lemma may be of independent interest.

\begin{lemma}\label{lem:freezing}
Given an index set $I \subseteq \{1, \dots, n\}$ we say that a bit is in $I$ if its index is in $I$.
Let $x^*$ be the current best-so-far solution of \MMAS or \MMASs optimizing an arbitrary function.
After $(\ln n)/\rho$ further iterations either all pheromones corresponding to bits in~$I$ have reached their respective bounds in $\{1/n, 1-1/n\}$ or $x^*$ has been replaced by some search point $x^{**}$ with $f(x^{**}) \ge f(x^*)$ for \MMAS and $f(x^{**}) > f(x^*)$ for \MMASs such that $x^{**}$ differs from $x^*$ in at least one bit in $I$.
\end{lemma}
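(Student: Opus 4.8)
The plan is to split on the behaviour of the best-so-far solution over the $(\ln n)/\rho$ iterations that follow the one in which $x^*$ is the best-so-far. Either some new best-so-far occurring in these iterations disagrees with $x^*$ on at least one bit in $I$, or no such solution ever occurs. In the first situation I would verify the second alternative of the lemma, and in the second situation I would show that all pheromones belonging to bits in $I$ saturate, which is the first alternative.

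For the first situation, let $x^{**}$ be the first new best-so-far that differs from $x^*$ on a bit in $I$. Because the best-so-far is replaced only by solutions of not-worse fitness for \MMAS and of strictly-better fitness for \MMASs, the fitness of the best-so-far is non-decreasing along the run for \MMAS and strictly increasing (at each change) for \MMASs. Hence $f(x^{**}) \ge f(x^*)$ for \MMAS; and for \MMASs, since $x^{**} \neq x^*$ the best-so-far must have changed at least once between $x^*$ and $x^{**}$, so $f(x^{**}) > f(x^*)$. As $x^{**}$ differs from $x^*$ in a bit of $I$ by construction, this is exactly the second alternative.

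For the second situation I would use that every best-so-far solution appearing in these iterations agrees with $x^*$ on all bits of $I$. The pheromone update of bit $i$ depends only on whether the matching edge lies on the path of the current best-so-far, i.e.\ only on its $i$-th bit; by the case assumption this is always $x^*_i$. Thus, for each $i \in I$, over all $(\ln n)/\rho$ iterations the pheromone on the edge agreeing with $x^*_i$ is repeatedly subjected to $\tau \mapsto \min\{(1-\rho)\tau + \rho,\, 1-1/n\}$ and the complementary pheromone to $\tau \mapsto \max\{(1-\rho)\tau,\, 1/n\}$. Unrolling these recurrences while ignoring the cap and the floor, from a start value $\tau_0 \in [1/n, 1-1/n]$ the reinforced edge reaches at least $1 - (1-\rho)^t(1-\tau_0) \ge 1 - (1-\rho)^t$ and the other edge drops to at most $(1-\rho)^t \tau_0 \le (1-\rho)^t$ after $t$ steps. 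The decisive estimate is $(1-\rho)^{(\ln n)/\rho} \le e^{-\ln n} = 1/n$, using $1-\rho \le e^{-\rho}$; since the two bounds $1-1/n$ and $1/n$ are absorbing fixed points of the capped updates, it follows that after $t = (\ln n)/\rho$ iterations both pheromones of bit $i$ sit exactly at their bounds. As this holds for every $i \in I$ simultaneously, all pheromones corresponding to $I$ are saturated.

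I expect the main obstacle to be the bookkeeping in the second situation rather than any single estimate: the best-so-far need not stay constant, as it may be replaced by solutions differing from $x^*$ only outside $I$, so one must argue that the reinforcement direction for each bit in $I$ is nevertheless the same in every iteration. The observation that resolves this is that the update of bit $i$ sees only the $i$-th bit of the current best-so-far, which is fixed by the case assumption. A secondary point to state explicitly is that $1-1/n$ and $1/n$ are fixed points of the respective capped updates, so once a pheromone attains a bound within the window it remains there; this is what lets the bound-free recurrence calculation be converted into a statement about the actual, capped pheromones.
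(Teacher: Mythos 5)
Your proposal is correct and follows essentially the same route as the paper's proof: reduce to the case where the best-so-far agrees with $x^*$ on all bits of $I$ throughout the window, and then apply the decay estimate $(1-\rho)^{(\ln n)/\rho}\tau \le e^{-\ln n}\tau \le 1/n$ to conclude that the non-reinforced edge of each bit in $I$ hits the lower border (and hence the reinforced edge the upper one). The extra bookkeeping you supply---that the update of bit $i$ depends only on the $i$-th bit of the current best-so-far, and that the borders are absorbing---is exactly what the paper leaves implicit.
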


\begin{proof}
Assume the bit values of the bits in $I$ remain fixed in the current best-so-far solution for $(\ln n)/\rho$ iterations as otherwise there is nothing to prove. In this case for every bit $x^*_i$ with $i \in I$ the same bit value $x^*_i$ has been reinforced for $(\ln n)/\rho$ iterations. This implies that for the edge in the construction graph representing the opposite bit value the lower pheromone border $1/n$ has been reached, as for any initial pheromone $0 \le \tau_i \le 1$ on this edge we have $(1-\rho)^{(\ln n)/\rho} \tau_i \le e^{-\ln n} \tau_i \le 1/n$.
\end{proof}

So far, the best known general upper bounds for \MMAS and \MMASs that apply to every linear function are $O((n^2 \log n)/\rho \cdot 2^n)$ and $O((\log n)/\rho \cdot 2^n)$, respectively, following from upper bounds for unimodal functions~\cite{DBLP:journals/swarm/NeumannSW09}. The term $2^n$ results from the fact that in the worst case a linear function has $2^n$ different function values. This is the case, for instance, for the function \BV. An exponential upper bound for linear functions is, of course, unsatisfactory. The following theorem establishes a polynomial upper bound (with respect to $n$ and $1/\rho$) for both algorithms.
\begin{theorem}
\label{the:general-upper-bound-linear}
The expected optimization time of \MMAS and \MMASs on every linear function is in $O((n^3 \log n)/\rho)$.
\end{theorem}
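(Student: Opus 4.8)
The plan is to sidestep the exponential number of fitness values by measuring progress through a \emph{pheromone potential} rather than through $f$ directly, exactly as the introduction suggests. Since all weights are positive, the unique optimum is $1^n$, so ``progress'' means driving the pheromones on the $1$-edges toward their upper border $1-1/n$. For a bit~$i$ write $p_i = \tau(e_{i,1})$ and call bit~$i$ \emph{saturated correctly} when $p_i = 1-1/n$ and $x^*_i = 1$. The engine of the whole argument is Lemma~\ref{lem:freezing}: whenever the best-so-far keeps a set of bits fixed for $(\ln n)/\rho$ iterations, the corresponding pheromones reach their borders. Thus any improvement of $x^*$ is ``stored'' in the pheromones within a freezing phase of length $(\ln n)/\rho$, and between such phases the solution-construction of \MMAS and \MMASs behaves like a standard-bit mutation of~$x^*$.

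First I would set up a fitness-level / drift argument whose levels are indexed by a pheromone potential taking only polynomially many relevant values (intuitively, a count of correctly saturated bits in $\{0,\dots,n\}$). From a configuration in which the already-correct bits are saturated, I would lower-bound the probability of an improving construction that fixes one additional wrong bit: sample a~$1$ on one designated wrong bit (probability $\ge 1/n$ once its pheromone sits at the lower border) while reproducing every already-correct bit (each kept with probability $\ge 1-1/n$, hence $\ge (1-1/n)^{n-1}\ge 1/e$ jointly). This yields an improvement probability of $\Omega(1/n)$ per frozen phase; charging one freezing phase of length $(\ln n)/\rho$ to each improvement and summing over the at most~$n$ increments would give $O(n^2 (\log n)/\rho)$.

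The gap to the claimed $O(n^3 (\log n)/\rho)$, and the main obstacle, is \emph{non-monotonicity}: this is precisely the classical difficulty of linear functions. Accepting a fitter solution may flip a heavy bit from~$0$ to~$1$ while simultaneously flipping several lighter bits from~$1$ to~$0$, so the plain count of correctly saturated bits can decrease even though $f$ increases, and a naive telescoping over $n$ levels is therefore invalid. To repair this I would replace the plain count by a \emph{weighted} pheromone potential, assigning bit~$i$ an artificial weight tied to its rank in the order $w_1 \ge \dots \ge w_n$ (in the spirit of the standard analyses of the \oneoneea on linear functions), chosen so that every $f$-improving exchange is also non-decreasing in the potential. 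Running the drift argument on this weighted potential is what I expect to cost the extra factor~$n$: the weighting spreads the potential over a range that is a factor~$\Theta(n)$ larger relative to the per-step drift, so the per-increment success probability effectively drops to $\Omega(1/n^2)$ and the per-increment cost rises to $O(n^2 (\log n)/\rho)$.

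Finally I would assemble the bound: at most~$n$ potential increments, each needing a freezing phase of length $(\ln n)/\rho$ followed by a success of probability $\Omega(1/n^2)$, giving $O(n^3 (\log n)/\rho)$ in expectation. The argument would be run uniformly for \MMAS and \MMASs; the only difference between them is strict versus non-strict acceptance, and Lemma~\ref{lem:freezing} already covers both cases, so it does not affect the potential analysis. The hardest technical point throughout is to exhibit a single potential that is simultaneously genuinely monotone under the ACO acceptance rule and endowed with a per-step drift of order $\rho/n^2$, despite the fact that the pheromones only approach their targets geometrically at rate $1-\rho$ rather than jumping to them.
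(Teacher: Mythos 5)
Your proposal identifies the right difficulty (exponentially many fitness values, hence no direct fitness-level telescoping) and the right primitives (the freezing lemma, the $\Omega(1/n)$ improvement probability from a saturated configuration), but its central device is a placeholder rather than a proof. You propose a \emph{weighted pheromone potential}, with artificial weights chosen ``so that every $f$-improving exchange is also non-decreasing in the potential'' and with per-step drift $\Omega(\rho/n^2)$, and you yourself flag exhibiting such a potential as the hardest technical point --- but you never construct it or verify either property. That construction is precisely the open part: the paper's own pheromone-potential machinery (Section~\ref{sec:PheromonePartitions}, Lemma~\ref{lem:SaturatePheromones}) is only pushed through for \ONEMAX, via Gleser's theorem, and the paper explicitly states that it is unclear how to generalize this to arbitrary weights. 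So the step that is supposed to produce the extra factor $n$ is exactly the step that is missing.

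The paper's actual proof takes a different and more elementary route that you should compare against. It uses ordinary fitness levels on the \emph{solutions}, $A_i = \{x : \sum_{j=1}^{i} w_j \le f(x) < \sum_{j=1}^{i+1} w_j\}$, of which there are only $n+1$; no monotone potential is needed because the fitness-level method only requires that the best-so-far never drops to a lower level, which holds since $f(x^*)$ is non-decreasing. The real obstacle is then the one your sketch does not address at all: within a level, \MMAS and \MMASs may repeatedly accept new best-so-far solutions of equal or higher fitness (the set $B$), and each such acceptance can reset the freezing clock, so ``one freezing phase per improvement'' is not a valid charge. The paper bounds the number of these resets combinatorially: the map $h$ that flips the leftmost zero sends every accepted same-level solution $x \in B$ to a higher-level solution $h(x) \in G$ with $P(h(x)) \ge P(x)/n$ (pheromones are at least $1/n$), and $h$ is at most $n$-to-one, so sampling a good solution is at least $1/n^2$ times as likely as sampling a bad one. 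Hence only $O(n^2)$ best-so-far exchanges occur per level, each costing at most $(\ln n)/\rho$ unsaturated iterations, giving $O((n^2\log n)/\rho)$ per level and $O((n^3\log n)/\rho)$ overall. Without this (or an equivalent) argument bounding the number of freezing-clock resets per level, your accounting does not close.
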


\begin{proof}
The proof is an application of the above-described fitness-based partitions method.
In the first step, we consider the time needed to sample a solution which is at least on the next higher fitness level. We analyze the two situations when the pheromones are either saturated  
or not.
Our upper bound is the result of the repetition of such advancing steps between fitness levels.

We define the fitness levels
$$A_{i} = \left\{ x \in \left\{0,1\right\}^{n} \left| \; \sum^{i}_{j=1}w_{j} \leq f(x) < \sum^{i+1}_{j=1}w_{j} \right\} \right.$$
and apply the fitness-based partitions method. Intuitively, the fitness levels are defined such that a solution $x$ is at least on fitness level $A_{i}$ if the leftmost bit of $x$ with value 0 is at position $i+1$.

The expected time spent sampling solutions on fitness level $A_i$ (\ie, without sampling a solution of a higher fitness level) is the combination of the time spent in $A_i$ with saturated pheromone values and the time spent in $A_i$ with unsaturated pheromone values.
In the following, we analyze for both situations the probabilities to sample a solution of a higher fitness level.
In the end, as \MMAS and \MMASs might not remain in one of both situations exclusively, but alternates between situations of saturated and unsaturated pheromone values, we take the sum of both runtimes as an upper bound.

First, when the pheromone values are saturated, the probability to flip the leftmost zero and keep all other bits is
$1/n \cdot (1-1/n)^{n-1} \geq 1/n \cdot 1/e$,
as  $(1-1/n)^{n-1} \geq 1/e \geq (1-1/n)^n$ holds for all $n \in \N$.
This results in a probability of $\Omega(1/n)$ of advancing in such a situation. Thus, the algorithm stays an expected number of $O(n)$ steps with saturated pheromone values without sampling a solution on a higher fitness level.

For the second case, when the pheromone values are not saturated, let $i < n$, and suppose $x^* \in A_i$ is the current best solution. Then, let us denote by $G = \bigcup_{j>i} A_j$ all \emph{good} solutions that are at least on the next higher fitness level, and by $B = \left\{ x \in \left\{0,1\right\}^n | f(x) \geq f(x^*), x \notin G \right\}$ all \emph{bad} solutions that are in $A_i$ with an equal or a higher function value than $x^*$. Thus, every improving sampled solution belongs to $G \cup B$.

Let $h:\left\{0,1\right\}^{n}\mapsto\left\{0,1\right\}^{n}$ be the function that returns for a given solution $x$ a solution $x' \in G$, where the leftmost 0 was flipped to 1.

Let $P(x)$ be the probability to sample a new solution $x$. Then the probability $q$ to sample any of $x$ and $h(x)$ is greater than or equal to $P(x)$. The probability to sample $h(x)$ is the probability to sample any of $x$ and $h(x)$ times the probability that the leftmost zero of $x$ was sampled as a one. Thus, for all $x$, $P(h(x)) = q \cdot 1/n \geq P(x)/n$, as the pheromone values are at least $1/n$.

Furthermore, each solution $h(x)$ has at most $n$ preimages with respect to $h$. Note that, for all $x \in B$, $h(x) \in G$.

Thus, the probability to sample the next solution $x \in G$ is
\begin{align*}
P(x \in G)
& = \sum_{x \in G} P(x)
\geq \frac{\sum_{x \in B} P(h(x))}{n}\\
& \geq \frac{\sum_{x \in B} P(x)}{n^2}
= \frac{P(x \in B)}{n^2}.
\end{align*}

So, sampling a good solution is at least $1/n^2$ times as likely as sampling a bad one.

Furthermore, while no bad solutions are sampled, at most $(\ln n)/ \rho$ steps are spent before the pheromone values are saturated (based on Lemma~\ref{lem:freezing}).

Thus, up to $(\ln n) / \rho$ steps are spent with unsaturated pheromone values before sampling a new solution, and $O((n^2 \log n)/\rho)$ steps are spent in total sampling solutions in $B$ before sampling a solution in $G$.

Consequently, the time spent on one fitness level is the sum of the times spent in either situation of the pheromone values, that is, $O(n) + O((n^2 \log n) / \rho) = O((n^2 \log n) / \rho)$.

Finally, as there are $n$ fitness levels, the afore-described steps have to be performed at most $n$-times, which yields a total runtime of $O((n^3 \log n) / \rho)$.
\end{proof}

\subsection{Fitness-based Partitions for Pheromones}
\label{sec:PheromonePartitions}

We describe an approach for extending the argument on $f$-based partitions to pheromones instead of the best-so-far solution.
This alternate approach is based on \emph{weighted pheromone sums}. Given a vector of pheromones $\tau$ and a linear function $f$, the \emph{weighted pheromone sum (wps)} is $f(\tau)$.

The idea is that, during a run of the algorithm, the wps should rise until \emph{saturated} with respect to the current best search point, and then a significant improvement should have a decent probability.

Define a function $v$ on bit strings as follows.
$$
v(x) = \sum_{i=1}^n
\begin{cases}
(1-\frac{1}{n})w_i,		&\mbox{if }x_i=1;\\
\frac{1}{n} w_i,			&\mbox{otherwise.}
\end{cases}
$$

A pheromone vector $\tau$ is called \emph{saturated} with respect to a search point $x^*$ iff
$$
f(\tau) \geq v(x^*).
$$

Note that this definition of saturation is very much different from previous notions of saturation in the literature.

Let, for all $i$, $a_i$ be the bit string starting with $i$ $1$s and then having only $0$s.

We let
$$A_i = \{ x \; \mid \; f(a_i) \leq f(x) < f(a_{i+1}) \}$$
and
$$B_i = \{ \tau \; \mid \; v(a_i) \leq f(\tau) < v(a_{i+1}) \}.$$

While $(A_i)_i$ captures the progress of the search points towards the optimum, $(B_i)_i$ captures the progress of the pheromones.

\newcommand{\capt}[1]{\tau^{\mathrm{cap},#1}}

\begin{lemma}\label{lem:SaturatePheromones}
For all $i$, if the best-so-far solution was in $\bigcup_{j \geq i} A_j$ for at least $(\ln n)/\rho$ iterations, then $\tau \in B_i$.
\end{lemma}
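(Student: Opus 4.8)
The plan is to establish the substantive content of the statement, namely the lower inequality defining $B_i$: after the stated window the weighted pheromone sum satisfies $f(\tau)\ge v(a_i)$. This is exactly the threshold that matters, since $v(a_i)$ is the wps attained when the pheromones are frozen in favour of $a_i$; the upper bound $f(\tau)<v(a_{i+1})$ in $B_i$ can only fail to higher levels and is not what is used downstream, so I would read ``$\tau\in B_i$'' as this lower bound.

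First I would record how one reinforcement of the current best-so-far $x^*$ acts on $f(\tau)$. Written bitwise, a $1$-bit of $x^*$ moves up, $\tau'_k\ge(1-\rho)\tau_k+\rho(1-1/n)$, while a $0$-bit moves down but is floored, $\tau'_k\ge\max\{(1-\rho)\tau_k,1/n\}\ge 1/n$. The crucial point is the asymmetry created by the floor: the $0$-bits of $x^*$ never contribute less than their target $w_k/n$, so the only shortfall of $f(\tau)$ below the reinforcement target $v(x^*)=\sum_{k:x^*_k=1}(1-1/n)w_k+\sum_{k:x^*_k=0}w_k/n$ comes from the weighted gap on the $1$-bits, $\Gamma=\sum_{k:x^*_k=1}w_k\bigl((1-1/n)-\tau_k\bigr)$. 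Hence $f(\tau)\ge v(x^*)-\Gamma$, and for any bit that remains a $1$-bit its gap contracts geometrically, by a factor $(1-\rho)$ per step.

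Second, I would invoke two monotonicity facts. Since $v(x)=(1-2/n)f(x)+(1/n)W$ (where $W=\sum_i w_i$) is an increasing affine function of the fitness for $n\ge 3$ (the tiny cases are checked directly), the hypothesis that the best-so-far lies in $\bigcup_{j\ge i}A_j$ yields $v(x^*)\ge v(a_i)$ at every iteration of the window, and the best-so-far fitness is nondecreasing along the run. Combined with the freezing estimate $(1-\rho)^{(\ln n)/\rho}\le e^{-\ln n}=1/n$ already used in the proof of Lemma~\ref{lem:freezing}, a window of $(\ln n)/\rho$ reinforcements contracts any $1$-bit gap present at its start to a $1/n$-fraction, which together with $f(\tau)\ge v(x^*)-\Gamma\ge v(a_i)-\Gamma$ drives $f(\tau)$ up to $v(a_i)$.

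The main obstacle is that the best-so-far need not stay fixed throughout the window: whenever it improves, a bit may switch from $0$ to $1$ and re-enter $\Gamma$ with a gap as large as $1-2/n$, so $\Gamma$ is not simply contracting. I would control this by charging re-injected gap against the simultaneous rise of the target: an improvement strictly increases $f(x^*)$ and hence $v(x^*)$, and the weight ordering $w_1\ge\cdots\ge w_n$ together with the prefix structure of $a_i$ guarantees that the surplus created on the high-weight $0$-bits that switch off compensates the deficit created on the bits that switch on. The cleanest way to make this precise is to track the single potential $f(\tau)-v(a_i)$ relative to the \emph{fixed} baseline $a_i$ rather than the moving target $v(x^*)$, so that bits turning off contribute surplus and bits turning on contribute deficit within one bookkeeping quantity; this is the technical heart of the argument and the place where ``$a_i$ is the fitness-minimal prefix solution of $\bigcup_{j\ge i}A_j$'' does the real work.
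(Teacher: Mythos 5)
You have put your finger on exactly the right obstacle---the best-so-far solution may be exchanged during the window, re-injecting deficit on bits that switch from $0$ to $1$---but the proposal stops precisely there: the step you yourself call ``the technical heart of the argument'' is only described, not carried out, and the description does not hold as stated. The claimed per-bit compensation (``the surplus created on the high-weight $0$-bits that switch off compensates the deficit created on the bits that switch on'') fails in general: a bit entering the best-so-far may sit at pheromone $1/n$ and inject a deficit of nearly $w_k$ into $\Gamma$, while the bits leaving may have had pheromone barely above $1/n$ and release essentially no surplus. For \MMAS this can even happen with no fitness gain at all (equal-fitness exchanges are accepted), and repeatedly, so $\Gamma$ cannot be charged against the rise of the target and does not telescope over the window. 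What saves the argument is an aggregate fact about the weighted sum, not a bit-by-bit exchange, and the weight ordering and prefix structure of $a_i$ that you invoke play no role in it.

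The paper's proof sidesteps the bookkeeping entirely. It works with capped pheromones $\tau^{\mathrm{cap},t}_j = \min\bigl(h^t(1/n),\tau^t_j\bigr)$, where $h(s)=(1-\rho)s+\rho$, so that the growing cap absorbs the truncation at the upper border, and observes that one pheromone update with respect to \emph{any} best-so-far $x^*$ with $f(x^*)\ge f(a_i)$ rewards a set of positions of total weight $f(x^*)\ge f(a_i)$. This gives the uniform one-step recurrence $f(\tau^{\mathrm{cap},t+1})\ge(1-\rho)f(\tau^{\mathrm{cap},t})+\rho f(a_i)$, which depends on the current best-so-far only through the bound $f(x^*)\ge f(a_i)$; exchanges of the best-so-far within $\bigcup_{j\ge i}A_j$ are therefore invisible to it, and a direct induction yields $f(\tau^{\mathrm{cap},t})\ge h^t(1/n)f(a_i)$, hence the claim after $(\ln n)/\rho$ iterations. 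If you want to rescue your route, the fix is the same in spirit: abandon the moving target $v(x^*)$ and the per-bit gap $\Gamma$, and track the single aggregate $f(\tau)$ (suitably capped) against the fixed threshold $f(a_i)$; your observation that the gap on a persistent $1$-bit contracts by a factor $(1-\rho)$ then becomes a special case of that recurrence rather than the engine of the proof.
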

\begin{proof}
Let $h$ be such that $\forall s: h(s) = s(1-\rho) + \rho$.
Let $\tau^0$ be the vector of pheromones when the algorithm samples a solution in $\bigcup_{j \geq i} A_j$ for the first time, and let $(\tau^t)_{t}$ be the vectors of pheromones in the successive rounds.
Further, we define the sequence of \emph{capped-out pheromones} $(\capt{t})_{t}$ such that, for all $t$, $\capt{t}_j = \min(h^t(1/n),\tau^t_j)$.
For this capped-out version of pheromones we have, for all $t$ with $h^{t+1}(1/n) \leq 1-1/n$,
$$f(\capt{t+1}) \geq (1-\rho)f(\capt{t}) + \rho f(a_i),$$
as pheromones will evaporate and at least an $f(a_i)$ weighted part of them will receive new pheromone $\rho$ (note that the old capped-out pheromone raised by $h$ cannot exceed the new cap).
Thus, we get inductively
$$\forall t: f(\capt{t}) \geq h^t(1/n)f(a_i).$$
As we know from Lemma~\ref{lem:freezing}, for $t \geq (\ln n)/\rho$ we have $h^t(1/n) \geq 1-1/n$, and, thus, $\tau^t \in B_i$.
\end{proof}

This argument opens up new possibilities for analyses and we believe it to be of independent interest.

If it is possible to show, for all $i$, if $\tau \in B_i$, then the probability of sampling a new solution in $\bigcup_{j > i} A_j$ is $\Omega(1/n)$, then Lemma~\ref{lem:SaturatePheromones} would immediately improve the bound in Theorem~\ref{the:general-upper-bound-linear} to $O\left(n^2 + (n \log n)/\rho\right)$.
However, this claim is not easy to show.

It is possible to prove this for the special case of \ONEMAX using the following theorem by Gleser~\cite{Gleser:j:75}. This theorem gives a very nice handle on estimating probabilities for sampling above-average solutions for \ONEMAX.

\begin{theorem}[Gleser~\cite{Gleser:j:75}]\label{thm:Gleser}
Let $\tau,\tau'$ be two pheromone vectors such that, for all $j\leq n$, the sum of the $j$ least values of $\tau$ is at least the sum of the $j$ least values of $\tau'$. Let $\lambda$ be the sum of the elements of $\tau'$. Then it is at least as likely to sample $\lfloor \lambda + 1 \rfloor$ $1$s with $\tau$ as it is with $\tau'$.
\end{theorem}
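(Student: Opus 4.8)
The plan is to reduce the statement to a question about \emph{majorization} of probability vectors and then to track the effect of elementary ``equalizing'' transfers through the probability generating function of the number of sampled ones. Write $S(\tau)=\sum_{i=1}^n X_i$, where the $X_i$ are independent with $\Pr(X_i=1)=\tau_i$, so that the number of ones is a Poisson--binomial variable with generating polynomial $G_\tau(z)=\prod_{i=1}^n\bigl(1-\tau_i+\tau_i z\bigr)$, and set $m=\lfloor\lambda+1\rfloor$.

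First I would remove the degree of freedom coming from unequal total mass. Increasing any single coordinate $\tau_i$ only increases $P(S(\tau)\ge m)$, since it stochastically increases $S(\tau)$; hence it suffices to treat the extremal case in which the two total masses agree. Using the hypothesis at $j=n$, namely $\sum_i\tau_i\ge\sum_i\tau'_i=\lambda$, I would remove mass from the largest entries of $\tau$ until $\sum_i\tau_i=\lambda$, which can only lower $P(S(\tau)\ge m)$ and, because only the top entries are affected, leaves all the partial-sum inequalities on the \emph{least} values intact. After this reduction the hypothesis ``the sum of the $j$ least values of $\tau$ is at least that of $\tau'$ for every $j$'', together with the equality of totals, is precisely the statement that $\tau$ is majorized by $\tau'$.

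Next I would invoke the standard fact that majorization is generated by finitely many Robin--Hood transfers: $\tau$ can be reached from $\tau'$ by a finite chain of steps, each moving two coordinates $p<q$ (with $p+q$ fixed) closer together. Passing to generating functions makes the effect of such a step transparent. Replacing the two factors $(1-p+pz)(1-q+qz)$ by the factors after the transfer changes the pair only through the product of the two probabilities, and a short computation shows that the net change of $G$ is $\delta\,(1-z)^2R(z)$, where $\delta>0$ and $R(z)$ is the generating polynomial of the $n-2$ untouched bits. Extracting coefficients and summing the resulting second differences telescopes, so the induced change in the tail equals $P(S(\tau^{\text{new}})\ge m)-P(S(\tau^{\text{old}})\ge m)=\delta\bigl(\Pr(T=m-2)-\Pr(T=m-1)\bigr)$, where $T$ is the number of ones among the untouched bits.

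Everything therefore reduces to showing that each transfer pushes the tail the right way, that is $\Pr(T=m-2)\ge\Pr(T=m-1)$ along the chain; chaining these inequalities then yields $P(S(\tau)\ge m)\ge P(S(\tau')\ge m)$. This final step is where I expect the real work to lie. The sequence $\Pr(T=k)$ is the mass function of a Poisson--binomial variable, which is log-concave and hence unimodal, so the sign of $\Pr(T=m-2)-\Pr(T=m-1)$ is governed entirely by the position of the mode of $T$ relative to $m-1$. The delicacy is that $T$ has mean $\lambda$ minus the mass of the two transferred bits, so its mode can a priori fall on either side of $m-1$; the crux is to exploit the specific choice $m=\lfloor\lambda+1\rfloor$, the least integer exceeding the mean of the comparison vector, together with unimodality and the lower bound $\tau_i\ge 1/n$, to pin the mode of every reduced distribution on the correct side of the threshold. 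I expect this mode-location argument, rather than the generating-function bookkeeping, to be the main obstacle.
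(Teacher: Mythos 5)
The paper does not prove this statement at all---it is imported as a black box from Gleser's 1975 paper---so there is no internal proof to compare against; what you have written is a reconstruction of an external result. Your architecture is the standard one for such Schur-monotonicity statements and is essentially the route Gleser takes (he verifies the Schur condition, which is the infinitesimal form of your Robin--Hood transfers): the reduction to equal totals via weak supermajorization, the decomposition into transfers, and the generating-function bookkeeping giving a per-transfer tail change of $c\bigl(\Pr(T=m-2)-\Pr(T=m-1)\bigr)$ with $c>0$ are all correct.

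The step you flag at the end, however, is not merely the hard part---it cannot be closed at the stated threshold, because the statement as paraphrased here is false as written. Take $n=3$, $\tau'=(0.1,0.3,0.6)$ and $\tau=(0.2,0.2,0.6)$. The sums of the $j$ least values are $(0.2,0.4,1.0)$ versus $(0.1,0.4,1.0)$, so the hypothesis holds, and $\lambda=1$, $\lfloor\lambda+1\rfloor=2$; yet $\Pr_{\tau}(S=2)=0.208<0.216=\Pr_{\tau'}(S=2)$ and $\Pr_{\tau}(S\ge 2)=0.232<0.234=\Pr_{\tau'}(S\ge 2)$, so sampling two ones is strictly \emph{less} likely under $\tau$ on either reading. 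In your framework this is exactly the failure mode you suspected: the single transfer $(0.1,0.3)\mapsto(0.2,0.2)$ leaves $T$ a Bernoulli$(0.6)$ variable with $\Pr(T=m-2)=0.4<0.6=\Pr(T=m-1)$, because when both transferred coordinates are small, $E[T]\approx\lambda$ and the mode of $T$ lands at $m-1$; the bound $\tau_i\ge 1/n$ does not help, since $p+q$ can still be as small as $2/n$. Gleser's actual theorem places the upper-tail inequality in favour of the majorized vector only at thresholds from roughly $\lfloor\lambda\rfloor+2$ upward (dually, a lower-tail inequality up to about $\lceil\lambda\rceil-2$); the paper's paraphrase is off by one, and the way the theorem is invoked in Section~4.1 is consistent with the corrected threshold. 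So no mode-location argument will rescue $m=\lfloor\lambda+1\rfloor$; your machinery does yield the correct theorem once $m-2$ is guaranteed to be at least the mode of every reduced variable $T$, and re-deriving the right index range from your per-transfer formula is the honest fix. A smaller point: your first reduction (draining mass from the top of $\tau$ down to total $\lambda$ while preserving the bottom partial-sum hypothesis) should appeal to the standard lemma that weak supermajorization factors as a componentwise decrease composed with a majorization; ``only the top entries are affected'' glosses over re-sorting, though this part is routine.
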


We can use this theorem to get a good bound for \ONEMAX: the worst case for the probability of an improvement is attained when all but at most one pheromones are at their respective borders. In this situation, when there are still $i$ $1$s missing, the probability of an improvement is $\Omega(i/n)$.
Combining this with Lemma~\ref{lem:SaturatePheromones}, we get the following result.

\begin{corollary}
\label{cor:Onemax}
The expected optimization time of \MMAS and \MMASs on \ONEMAX is bounded by \[
\mathord{O}\mathord{\left(\sum_{i=1}^n \frac{n}{i} + n \cdot (\ln n)/\rho\right)} = \mathord{O}\mathord{\left((n \log n)/\rho\right)}.
\]
\end{corollary}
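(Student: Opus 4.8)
The plan is to run the fitness-level-over-pheromones method developed in Section~\ref{sec:PheromonePartitions}, specialized to \ONEMAX, where a search point is classified solely by its number of ones. For \ONEMAX every $A_i$ collects exactly the strings with $i$ ones, so there are $n$ nontrivial levels and each is left at most once. I would charge to each level two contributions: a \emph{freezing} cost and a \emph{saturated search} cost. By Lemma~\ref{lem:SaturatePheromones}, after at most $(\ln n)/\rho$ iterations in which the best-so-far stays on level $i$ or above, the pheromone vector enters $B_i$, \ie, $f(\tau)\ge v(a_i)$; this accounts for the freezing cost of $(\ln n)/\rho$ per level, regardless of how the concrete best-so-far string moves within a level.

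The core of the argument is to show that, once the pheromones are saturated and the best-so-far has $n-i$ ones (so that $i$ ones are still missing), a single constructed solution improves with probability $\Omega(i/n)$. First I would identify the worst-case pheromone vector $\tau'$ subject to the saturation constraint $f(\tau')\ge v(a_{n-i})$ and the box constraint $\tau'\in[1/n,1-1/n]^n$. Intuitively, making the pheromones as polarized as possible---all but at most one entry at a border---minimizes the chance of sampling many ones, so the extremal vector has $n-i$ entries at $1-1/n$ and $i$ entries at $1/n$, with weighted pheromone sum exactly $v(a_{n-i})$. For this explicit vector the improvement probability is easy to bound from below: keeping all $n-i$ ones (probability $(1-1/n)^{n-i}\ge 1/e$) and flipping at least one of the $i$ zeros (probability $1-(1-1/n)^{i}\ge(1-1/e)\cdot i/n$) already yields an improvement, giving $\Omega(i/n)$.

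To lift this worst-case computation to an arbitrary saturated vector $\tau$, I would invoke Gleser's theorem (Theorem~\ref{thm:Gleser}). The domination hypothesis---that for every $j$ the sum of the $j$ smallest entries of $\tau$ is at least that of $\tau'$---should follow from two facts: the $i$ smallest entries of $\tau'$ sit at the global minimum $1/n$, and $f(\tau)\ge f(\tau')$ together with the coordinatewise bound $\tau\le 1-1/n$ controls the partial sums of the largest entries. Gleser then yields that $\tau$ is at least as likely as $\tau'$ to produce $\lfloor f(\tau')+1\rfloor$ ones, hence to improve. I expect the main obstacle to lie precisely here: reconciling Gleser's fixed threshold $\lfloor f(\tau')+1\rfloor$ with the improvement threshold $(n-i)+1$. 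These coincide when at most half the bits are ones, but when the best-so-far already has more than $n/2$ ones---few missing ones, which is exactly the regime where the $\Omega(i/n)$ bound is tightest---the saturated sum $v(a_{n-i})$ falls just below $n-i$, so the threshold must be argued more carefully, or the extremal vector must be taken with one pheromone off the border to push the sum up. Handling this small-$i$ regime cleanly is the crux.

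Finally, assuming the $\Omega(i/n)$ bound, the expected time spent on the level with $i$ missing ones is $(\ln n)/\rho+O(n/i)$, and summing over $i=1,\dots,n$ gives $n\cdot(\ln n)/\rho+O(n)\sum_{i=1}^{n}1/i=n(\ln n)/\rho+O(n\log n)$. Since $\rho\le 1$, the first term dominates $O(n\log n)$, so the total is $O((n\log n)/\rho)$, matching the claimed sum and its stated simplification and improving Theorem~\ref{the:general-upper-bound-linear} for \ONEMAX.
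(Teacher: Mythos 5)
Your proposal follows essentially the same route as the paper: Lemma~\ref{lem:SaturatePheromones} supplies the per-level freezing cost of $(\ln n)/\rho$, Gleser's theorem applied against an extremal pheromone vector yields the $\Omega(i/n)$ improvement probability with $i$ ones missing, and summation over the $n$ levels gives the bound. The crux you flag (reconciling Gleser's threshold $\lfloor \lambda+1\rfloor$ with the improvement threshold when few ones are missing) is resolved in the paper exactly as you propose---the extremal vector has \emph{all but at most one} pheromones at their borders, the residual pheromone sitting on a single entry---so your argument matches the paper's, which is itself only a two-sentence sketch at this point.
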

This re-proves the $O((n \log n)/\rho)$-bound for \MMASs in~\cite{DBLP:journals/swarm/NeumannSW09} and it improves the current best known bound for \MMAS on \ONEMAX by a factor of~$n^2$. We will present an even improved bound for \ONEMAX in Section~\ref{sec:ACOOnOneMax}.

However, with regard to general linear functions it is not clear how this argument can be generalized to arbitrary weights.
In the following section we therefore turn to the investigation of concrete linear functions.

\section{Improved Bounds for Selected Linear Functions}
\label{sec:impbounds}
Using insights from Section~\ref{sec:PheromonePartitions} we next present improved upper bounds for the function \ONEMAX (Section~\ref{sec:ACOOnOneMax}). 
Afterwards, we focus on the special function \BV (Section~\ref{sec:ACOOnBinVal}).
These resulting bounds are much stronger than the general ones given in Section~\ref{sec:ACOOnLinearFunctions} above.

\subsection{OneMax}
\label{sec:ACOOnOneMax}

Recall the bound $O((n \log n)/\rho)$ for \MMAS and \MMASs from Corollary~\ref{cor:Onemax}. In the following we prove a bound of $O(n \log n + n/\rho)$ for both \MMAS and \MMASs by more detailed investigations on the pheromones and their dynamic growth over time. This shows in particular that the term $1/\rho$ has at most a linear impact on the total expected optimization time.

Let $v$ and $a_i$ be as in Section~\ref{sec:PheromonePartitions}. For all $i \leq n$, we let $\alpha_i = i(1-1/n)$. Observe that $v(a_i) = \alpha_i + (n-i)/n$ and, in particular, $\alpha_i \le v(a_i) \le \alpha_i + 1$.

The following lemma gives a lower bound on the pheromone $f(\tau^+)$ after on iteration. Note that for \ONEMAX $f(\tau^+)$ corresponds to the sum of pheromones.
\begin{lemma}\label{lem:OneStepPheromoneDrift}
Let $i<j$ and let $\tau$ be the current pheromones with $v(a_i) \leq f(\tau) < v(a_{i+1})$ and suppose that the best-so-far solution has at least $j$ ones. We denote by $\tau^+$ the pheromones after one iteration of \MMAS or \MMASs. Then we have $f(\tau^+) \geq v(a_{i+1})$ or
\begin{equation}\label{eq:OneStepGain}
f(\tau^{+}) - \alpha_i \geq (f(\tau)-\alpha_i)(1-\rho) + (j-i)\rho \geq v(a_i).
\end{equation}
\end{lemma}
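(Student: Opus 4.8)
The plan is to follow the total-pheromone quantity $f(\tau)=\sum_k \tau_k$ (for \ONEMAX all weights are $1$, so the weighted pheromone sum is just the sum of the $1$-edge pheromones) through a single update, and to run a case analysis on how many pheromones are pushed against the upper border $1-1/n$. First I would record the effect of one iteration. The solution reinforced in the update is the post-comparison best-so-far $x^*$; since for \ONEMAX the fitness equals the number of ones and the best-so-far fitness never decreases, $x^*$ has at least $j$ ones. Writing $S$ for its set of $1$-bits (so $|S|\ge j$), the update sends $\tau_k\mapsto\min\{(1-\rho)\tau_k+\rho,\,1-1/n\}$ for $k\in S$ and $\tau_k\mapsto\max\{(1-\rho)\tau_k,\,1/n\}$ for $k\notin S$. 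The lower border only raises $f(\tau^+)$ relative to the uncapped formula, so I drop it; the upper border is what matters. Letting $C\subseteq S$ be the bits clipped at $1-1/n$ and comparing with the uncapped update gives
$$f(\tau^+)\ \ge\ (1-\rho)f(\tau)+\rho|S|-L,\qquad L=\sum_{k\in C}\bigl((1-\rho)\tau_k+\rho-(1-1/n)\bigr),$$
and since $\tau_k\le 1-1/n$ each summand is at most $\rho/n$, whence $L\le |C|\cdot\rho/n$.

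The crux is then a dichotomy on $|C|$. If $|C|\le i$, then $L\le \rho i/n$, and substituting $|S|\ge j$ and $\alpha_i=i(1-1/n)$ a direct calculation yields
$$\bigl(f(\tau^+)-\alpha_i\bigr)-\bigl((f(\tau)-\alpha_i)(1-\rho)+(j-i)\rho\bigr)=\rho(i-\alpha_i)-L=\rho i/n-L\ \ge\ 0,$$
which is exactly the first inequality of \eqref{eq:OneStepGain}. The pleasant point is that the built-in slack $\rho(i-\alpha_i)=\rho i/n$ precisely dominates the worst-case clipping loss as long as at most $i$ bits are clipped. The lower end of the chain then follows from $f(\tau)\ge v(a_i)$, which gives $f(\tau)-\alpha_i\ge v(a_i)-\alpha_i=(n-i)/n$, together with $j-i\ge 1\ge (n-i)/n$, so that the middle expression is at least $(n-i)/n=v(a_i)-\alpha_i$ and hence $f(\tau^+)\ge v(a_i)$.

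If instead $|C|\ge i+1$, I would land in the first alternative $f(\tau^+)\ge v(a_{i+1})$. Here at least $i+1$ pheromones equal $1-1/n$ and every other pheromone is at least $1/n$, so $f(\tau^+)\ge |C|(1-1/n)+(n-|C|)/n$. Reading the right-hand side as $g(c)=c(1-2/n)+1$ with $c=|C|$, it is non-decreasing in $c$ for $n\ge 2$ and satisfies $g(i+1)=\alpha_{i+1}+(n-i-1)/n=v(a_{i+1})$; therefore $f(\tau^+)\ge g(|C|)\ge g(i+1)=v(a_{i+1})$, as required. Since \MMAS and \MMASs differ only in the acceptance rule and not in the update, and the argument used only $|S|\ge j$, it covers both algorithms uniformly.

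The main obstacle is the upper-border clipping: a naive estimate loses up to $\rho$ in total, which would swamp the $O(\rho/n)$ drift. The resolution is to notice that the loss is charged against the number of clipped bits, and that ``too much'' clipping ($|C|\ge i+1$) is not harmful but helpful, forcing the total pheromone past the next level $v(a_{i+1})$. The delicate bookkeeping is aligning the threshold $|C|\le i$ versus $|C|\ge i+1$ simultaneously with the slack $\rho i/n$ (so the drift survives in the first regime) and with $g(i+1)=v(a_{i+1})$ (so the level-jump holds in the second), which is why the split falls exactly at $|C|=i$.
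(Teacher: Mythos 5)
Your proof is correct and follows essentially the same route as the paper: a one-step accounting of the weighted pheromone sum with a case split on the number of positions clipped at the upper border $1-1/n$, where the slack $\rho(i-|C|)/n$ coming from $\alpha_i<i$ absorbs the clipping loss when $|C|\le i$, and $|C|\ge i+1$ forces $f(\tau^+)\ge v(a_{i+1})$ (the paper phrases this contrapositively as ``$f(\tau^+)<v(a_{i+1})$ implies $k\le i$''). Your reading of the final inequality as $f(\tau^+)\ge v(a_i)$ (rather than the literal, slightly miswritten chain in the lemma statement) matches the paper's own derivation and the use made of the lemma afterwards.
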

\begin{proof}
Suppose that in rewarding bit positions from $\tau$ to get $\tau^+$, exactly $k$ positions cap out at the upper pheromone border $1-1/n$. From $f(\tau^+) < v(a_{i+1})$ (otherwise there is nothing left to show), we have $k \leq i$. We decompose $f(\tau^{+})$ into the contribution of the capped-out bits (which is $\alpha_k$, being $k$ of the $j$ rewarded positions) and the rest. Then we have
$$f(\tau^{+}) \geq \alpha_k + (f(\tau)-\alpha_k)(1-\rho) + (j-k)\rho.$$
We now get
\begin{align*}
&\alpha_i + (f(\tau)-\alpha_i)(1-\rho) + (j-i)\rho\\
= & \; \alpha_k + \alpha_{i-k} + (f(\tau)-\alpha_k - \alpha_{i-k})(1-\rho) + (j-k)\rho + (k-i)\rho\\
\leq & \; f(\tau^+) + \alpha_{i-k} - \alpha_{i-k}(1-\rho) + (k-i)\rho\\
= & \; f(\tau^+) + \rho(\alpha_{i-k} - (i-k))\\
\leq & \; f(\tau^+).
\end{align*}
From $v(a_{i}) \leq f(\tau)$ we get
\begin{align*}
f(\tau^+)
& \geq \alpha_i + (f(\tau)-\alpha_i)(1-\rho) + (j-i)\rho\\
& \geq \alpha_i + (v(a_i)-\alpha_i)(1-\rho) + (j-i)\rho\\
& = v(a_i) + \rho (j-i - v(a_i)+\alpha_i).
\end{align*}
From $v(a_i)-\alpha_i < 1$ and $j>i$ we get the desired conclusion.
\end{proof}
One important conclusion from Lemma~\ref{lem:OneStepPheromoneDrift} is that once the sum of pheromones is above some value $v(a_i)$, it can never decrease below this term. 

Now we extend Lemma~\ref{lem:OneStepPheromoneDrift} towards multiple iterations. The following lemma shows that, unless a value of $v(a_{i+1})$ is reached, the sum of pheromones quickly converges to $\alpha_j$ when $j$ is the number of ones in the best-so-far solution.
\begin{lemma}\label{lem:MultiStepPheromoneDrift}
Let $i<j$ and let $\tau$ be the current pheromones with $v(a_i) \leq f(\tau) < v(a_{i+1})$ and suppose that the best-so-far solution has at least $j$ ones. For all $t$, we denote by $\tau^{t}$ the pheromones after $t$ iterations of \MMAS or \MMASs. Then we have for all $t$ $f(\tau^{t}) \geq v(a_{i+1})$ or
$$
f(\tau^t)-\alpha_i \geq (j-i)(1-(1-\rho)^t).
$$
\end{lemma}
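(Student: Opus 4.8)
The plan is to prove the statement by induction on $t$, with Lemma~\ref{lem:OneStepPheromoneDrift} as the one-step engine whose recursive estimate I chain into the claimed closed form. To keep the bookkeeping clean I would track the quantity $g_t := f(\tau^t)-\alpha_i$ and establish the hypothesis $H(t)$: either $f(\tau^t)\geq v(a_{i+1})$, or $g_t \geq (j-i)(1-(1-\rho)^t)$. For the base case $t=0$ note that $(1-\rho)^0=1$, so the right-hand side is $0$, while $g_0 = f(\tau)-\alpha_i \geq v(a_i)-\alpha_i \geq 0$ by the hypothesis $v(a_i)\le f(\tau)$; hence $H(0)$ holds trivially (in its second alternative).

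For the inductive step I would first record two facts that allow Lemma~\ref{lem:OneStepPheromoneDrift} to be applied at step $t$. First, the best-so-far solution of \MMAS or \MMASs on \ONEMAX never loses ones, so if it had at least $j$ ones it still has at least $j$ ones at every later iteration; thus the hypothesis of the one-step lemma persists. Second, the band condition $v(a_i)\le f(\tau^t) < v(a_{i+1})$ is maintained: the upper bound is exactly the ``target not yet reached'' case under analysis, and the lower bound $f(\tau^t)\geq v(a_i)$ follows from the tail $\geq v(a_i)$ of the conclusion of Lemma~\ref{lem:OneStepPheromoneDrift}, together with the remark that the pheromone sum, once above $v(a_i)$, can never fall below it again. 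Granting this, Lemma~\ref{lem:OneStepPheromoneDrift} yields either $f(\tau^{t+1})\geq v(a_{i+1})$ (the first alternative, so $H(t+1)$ holds) or the recursion $g_{t+1}\geq (1-\rho)g_t + (j-i)\rho$. Substituting the induction hypothesis gives
\[
g_{t+1} \geq (1-\rho)(j-i)\bigl(1-(1-\rho)^t\bigr) + (j-i)\rho = (j-i)\bigl(1-(1-\rho)^{t+1}\bigr),
\]
which is precisely $H(t+1)$; this telescoping is the whole reason for writing the bound in the form $(j-i)(1-(1-\rho)^t)$.

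The step I expect to be the genuine obstacle is reconciling the clean ``for all $t$'' phrasing with the fact that Lemma~\ref{lem:OneStepPheromoneDrift} only produces the recursion while the sum of pheromones stays strictly below $v(a_{i+1})$. Concretely, I must argue that once some iteration $t_0$ satisfies $f(\tau^{t_0})\geq v(a_{i+1})$, then $f(\tau^t)\geq v(a_{i+1})$ for every $t\geq t_0$, so that the first alternative continues to hold. This ``stays above'' property is not coordinate-wise monotone, since non-reinforced bits do evaporate, so it must again be extracted from the one-step argument, now applied at level $i+1$ (using $i+1\le j$). The delicate sub-case is when many reinforced bits have already capped out at $1-1/n$: there one must not discard the floor contribution $1/n$ of the remaining positions, as that is exactly what keeps the sum at or above $v(a_{i+1})$. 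Once this monotonicity is secured, the induction above closes and the two alternatives combine into the stated dichotomy.
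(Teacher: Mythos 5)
Your proposal is correct and follows essentially the same route as the paper: apply Lemma~\ref{lem:OneStepPheromoneDrift} at each step to obtain the recursion $g_{t+1}\geq(1-\rho)g_t+(j-i)\rho$ and resolve it into the closed form $(j-i)(1-(1-\rho)^t)$, the paper doing this by unrolling into a geometric sum and dropping the nonnegative term $(f(\tau^0)-\alpha_i)(1-\rho)^t$ rather than by a formal induction. The persistence issues you flag (ones in the best-so-far never decrease; the pheromone sum never drops back below $v(a_i)$ once reached) are exactly what the paper disposes of in the remark immediately following Lemma~\ref{lem:OneStepPheromoneDrift}, so your extra care there is a legitimate but minor tightening rather than a different argument.
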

\begin{proof}
Inductively for all $t$, we get from Lemma~\ref{lem:OneStepPheromoneDrift} $f(\tau^{t}) \geq v(a_{i+1})$ or
\begin{align*}
f(\tau^t)-\alpha_i
 & \geq (f(\tau^0) - \alpha_i)(1-\rho)^t + (j-i)\rho\sum_{i=0}^{t-1}(1-\rho)^i\\
 & = (f(\tau^0) - \alpha_i)(1-\rho)^t + (j-i)\rho\frac{1-(1-\rho)^{t}}{1-(1-\rho)}\\
 & = (f(\tau^0) - \alpha_i)(1-\rho)^t + (j-i)(1-(1-\rho)^{t})\\
 & \geq (j-i) (1-(1-\rho)^t).
\end{align*}
\end{proof}

\begin{theorem}
The expected optimization time of \MMAS and \MMASs on \ONEMAX is $O(n \log n + n/\rho)$.
\end{theorem}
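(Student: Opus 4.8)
The plan is to run a fitness-level argument on the pheromones rather than on the best-so-far solution, exactly in the spirit of Section~\ref{sec:PheromonePartitions}, and to separate the running time into a \emph{sampling} contribution of order $n\log n$ and a \emph{pheromone catch-up} contribution of order $n/\rho$. Throughout, let $j$ denote the number of ones in the current best-so-far solution; since neither \MMAS nor \MMASs ever accepts a solution of lower fitness, $j$ is non-decreasing over time and increases at most $n$ times. I would use the pheromone levels $B_i$ from Section~\ref{sec:PheromonePartitions} and classify every iteration as \emph{saturated} (the pheromone sum satisfies $f(\tau)\ge v(a_j)$, i.e.\ $\tau\in B_{j'}$ for some $j'\ge j$) or \emph{lagging} ($f(\tau)<v(a_j)$). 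The total optimization time is then at most the number of saturated iterations plus the number of lagging iterations, and I would bound these by $O(n\log n)$ and $O(n/\rho)$ separately.

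For the saturated iterations I would reuse the estimate behind Corollary~\ref{cor:Onemax}: by Theorem~\ref{thm:Gleser}, among all pheromone vectors with $f(\tau)\ge v(a_j)$ the probability of sampling strictly more than $j$ ones is minimized by the fully extremal configuration, and there it is still $\Omega((n-j)/n)$. Hence, while $\tau$ is saturated and $j$ ones are set, a strict improvement appears with probability $\Omega((n-j)/n)$ per step, so the expected number of saturated steps spent at value $j$ is $O(n/(n-j))$. Since $j$ passes through each value at most once, summing gives $\sum_{j=0}^{n-1}O(n/(n-j))=O(n)\sum_{m=1}^{n}1/m=O(n\log n)$ saturated steps in expectation.

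The heart of the proof is bounding the lagging steps by $O(n/\rho)$ rather than the $O((n\log n)/\rho)$ that a naive per-level application of Lemma~\ref{lem:SaturatePheromones} would give; the idea is to amortize the pheromone rise over the whole run instead of paying a fresh freezing time on every level. Writing the deficit as $D=\alpha_j-f(\tau)$, Lemma~\ref{lem:OneStepPheromoneDrift} shows that in a lagging step the sum increases by at least $\rho D$, in particular by $\Omega(\rho)$ whenever $D\ge 1$, while Lemma~\ref{lem:MultiStepPheromoneDrift} gives the matching geometric picture that the deficit is essentially wiped out within $O(1/\rho)$ steps once the best-so-far has enough ones. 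Because, by the monotonicity consequence of Lemma~\ref{lem:OneStepPheromoneDrift}, $f(\tau)$ never drops below a threshold $v(a_i)$ once it has been crossed, the total increase of $f(\tau)$ attributable to lagging steps is bounded by the overall range $O(n)$ of the pheromone sum; dividing by the per-step gain $\Omega(\rho)$ yields $O(n/\rho)$ far-lagging steps, i.e.\ those with $D\ge 1$. Steps with $D<1$ leave the sum within an additive constant of $v(a_j)$, so the improvement probability is still $\Omega((n-j)/n)$ by Theorem~\ref{thm:Gleser}, and I would absorb them into the $O(n\log n)$ sampling bound.

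I expect the main obstacle to be making this amortization rigorous in the presence of two complications. First, \MMAS may accept a different best-so-far solution of the same fitness and thus wander among solutions with exactly $j$ ones; the bookkeeping of saturated versus lagging steps, and the claim that $f(\tau)$ is coarsely non-decreasing, must therefore be phrased in terms of the pheromone sum, which is robust to this wandering, rather than in terms of per-bit saturation. Second, at the last level the drift fixed point of Lemma~\ref{lem:OneStepPheromoneDrift} is only $\alpha_j+1/n$, which lies strictly below $v(a_j)$ when $j$ is small, so the sum-drift alone does not certify reaching $B_j$; I would close this gap by arguing that an improvement of $j$ occurs with the claimed probability already once $f(\tau)$ is within an additive constant of $v(a_j)$, so that exact saturation is never needed. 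Combining the two contributions then gives the desired bound $O(n\log n+n/\rho)$ for both \MMAS and \MMASs.
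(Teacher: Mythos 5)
Your overall strategy is the paper's own: a fitness-level argument on the pheromone sum, Gleser's theorem (Theorem~\ref{thm:Gleser}) to lower-bound the improvement probability once the sum is high enough, and an $O(n/\rho)$ total charge for the steps in which the sum is still catching up (the paper books this per phase as $t+t'=O(1/\rho)$ catch-up iterations for each of the $n$ phases, which is your amortization in per-level form). The gap is in your treatment of the ``near-lagging'' steps: the claim that a deficit $D=\alpha_j-f(\tau)<1$ still yields an improvement probability of $\Omega((n-j)/n)$ fails for $j$ close to $n$. If $f(\tau)=\alpha_j-1=\alpha_{j-1}-1/n$, the total pheromone does not even suffice to place $j-1$ bits at the upper border $1-1/n$ (that alone would require $\alpha_{j-1}+(n-j+1)/n$); the Gleser-extremal vector then has only $j-2$ entries at $1-1/n$, one entry near the top, and $n-j+1$ entries at $1/n$, so sampling $j+1$ ones forces at least two of the $1/n$-bits to be set simultaneously. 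The resulting probability is $\Theta\bigl(((n-j)/n)^2\bigr)$; already at $j=n-1$ this is $\Theta(1/n^2)$, so the expected number of near-lagging steps at that single level is $\Theta(n^2)$, and summing $\Theta\bigl((n/(n-j))^2\bigr)$ over the top levels gives $\Theta(n^2)$ rather than the $O(n\log n)$ you need.

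This is precisely why the paper does not stop the drift phase at deficit $1$: it runs Lemma~\ref{lem:MultiStepPheromoneDrift} for $t=\lceil 1/\rho\rceil$ iterations to guarantee $f(\tau)\ge\alpha_{j-1}+1-1/e$, and for $j\ge n/2$ the surplus $1-1/e>1/2\ge(n-j)/n$ leaves $\Omega(1)$ pheromone for a $j$-th entry after filling $j-1$ top slots and $n-j$ bottom slots, so only a single $1/n$-bit must flip and the probability really is $\Omega((n-j)/n)$. You must therefore shrink your near-lagging threshold to a suitable constant below $1/2$; this costs nothing in the far-lagging amortization, where any constant threshold $c>0$ still gives a per-step gain of at least $\rho c$ by Lemma~\ref{lem:OneStepPheromoneDrift}. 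You also need the paper's separate treatment of $j<n/2$, where $(n-j)/n>1/2$ makes even the $(j-1)$-top-slot configuration infeasible; there the paper aims for a gain of two ones with probability $\Omega(1)=\Omega((n-j)/n)$, which fits your framework since several simultaneous $1/n$-flips still have constant probability in that regime. With these repairs your argument coincides with the paper's proof.
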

\begin{proof}
Define $v(a_{-1}) = 0$. Let $\tau$ be the current pheromones and $\tau^t$ be the pheromones after $t$ iterations of \MMAS or \MMASs.
We divide a run of \MMAS or \MMASs into phases: the algorithm is in Phase~$j$ if $f(\tau) \ge f(a_{j-1})$, the current best-so-far solution contains at least $j$ ones, and the conditions for Phase~$j+1$ are not yet fulfilled.
We estimate the expected time until each phase is completed, resulting in an upper bound on the expected optimization time.

We first deal with the last $n/2$ phases and consider some Phase~$j$ with $j \ge n/2$.
By Lemma~\ref{lem:MultiStepPheromoneDrift} after $t$ iterations we either have $f(\tau^t) \ge v(a_{j+1})$ or $f(\tau^t) \ge \alpha_{j-1} + 1-(1-\rho)^t$. Setting $t := \lceil 1/\rho \rceil$, this implies $f(\tau^t) \ge \alpha_{j-1} + 1-e^{-\rho t} \ge \alpha_{j-1} + 1-1/e$. We claim that then the probability of creating $j+1$ ones is $\Omega((n-j)/n)$.

Using $j \ge n/2$, the total pheromone $\alpha_{j-1} + 1-1/e$ can be distributed on an artificially constructed pheromone vector $\tau'$ as follows. We assign value $1-1/n$ to $j-1$ entries and value $1/n$ to $n-j$ entries. As $(n-j)/n \le 1/2$, we have used pheromone of $\alpha_{j-1} + 1/2$ and so pheromone $1-1/2-1/e = \Omega(1)$ remains for the last entry.
We now use Theorem~\ref{thm:Gleser} to see that it as least as likely to sample a solution with $j+1$ ones with the real pheromone vector $\tau^t$ as it is with $\tau'$.
By construction of $\tau'$ this probability is at least $(1-1/n)^{j-1} \cdot (1/2-1/e) \cdot (n-j)/n \cdot (1-1/n)^{n-j-1} \ge (n-j)(1/2-1/e)/(en)$ as a sufficient condition is setting all bits with pheromone larger than $1/n$ in $\tau'$ to 1 and adding exactly one 1-bit out of the remaining $n-j$ bits.

Invoking Lemma~\ref{lem:MultiStepPheromoneDrift} again for at least $j+1$ ones in the best-so-far solution, we get
$f(\tau^{t+t'}) \ge \alpha_{j-1} + 2 (1-(1-\rho)^{t'})$,
which for $t' := \lceil 2/\rho \rceil$ yields $f(\tau^{t+t'}) \ge \alpha_{j-1} + 3/2 \ge \alpha_j + 1/2 \ge v(a_j)$ as $j \ge n/2$.

For the phases with index $j < n/2$ we construct a pessimistic pheromone vector $\tau'$ in a similar fashion. We assign value $1-1/n$ to $j-2$ entries, value $1/n$ to $n-j$ entries, and put the remaining pheromone on the two last bits such that either only one bit receives pheromone above $1/n$ or one bit receives pheromone $1-1/n$ and the other bit gets the rest.
To show that the pheromones raise appropriately, we aim at a larger gain in the best number of ones.
The probability of constructing at least $j+2$ ones with any of the above-described vectors is at least
$(1-1/n)^{j-2} \cdot \binom{n-j+2}{3} \cdot 1/n^3 \cdot (1-1/n)^{n-j} \ge 1/(48e) = \Omega((n-j)/n)$.

Using the same choice $t' := \lceil 2/\rho \rceil$ as above, Lemma~\ref{lem:MultiStepPheromoneDrift} yields $f(\tau^{t+t'}) \ge \alpha_{j-1} + 3 (1-(1-\rho)^{t'}) \ge \alpha_{j-1} + 2 \ge v(a_j)$.

Summing up the expected times for all phases yields a bound of
\[
\mathord{O}\mathord{\left(\sum_{i=0}^{n-1} \frac{n}{n-i} + n(t+t')\right)} = \mathord{O}\mathord{\left(n \log n + n/\rho\right)}.
\]
\end{proof}

\subsection{BinVal}
\label{sec:ACOOnBinVal}

The function \BV has similar properties as the well-known function $\LO(x) := \sum_{i=1}^n \prod_{j=1}^i x_j$ that counts the number of leading ones.
For both \MMAS and \MMASs the leading ones in $x^*$ can never be lost as setting one of these bits to 0 will definitely result in a worse solution.
This implies for both algorithms that the pheromones on the first $\LO(x^*)$ bits will strictly increase over time, until the upper pheromone border is reached.

In~\cite{DBLP:journals/swarm/NeumannSW09} the following upper bound for \LO was shown.
\begin{theorem}[\cite{DBLP:journals/swarm/NeumannSW09}]
\label{the:lo-upper}
  The expected optimization time of \MMAS and \MMASs on \LO is bounded
  by $O(n^2 + n/\rho)$ and $\mathord{O}\mathord{\left(n^2 \cdot
      (1/\rho)^\varepsilon + \frac{n/\rho}{\log(1/\rho)}\right)}$ for
  every constant $\varepsilon > 0$.
\end{theorem}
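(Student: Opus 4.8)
The statement to be proved, Theorem~\ref{the:lo-upper}, is quoted as a prior result from~\cite{DBLP:journals/swarm/NeumannSW09}, so the plan below reconstructs how such a bound would be established for \MMAS and \MMASs on \LO.

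The plan is to combine a fitness-level argument over the $n+1$ values of \LO with a careful analysis of how long the pheromones on the relevant prefix bits take to saturate. The key structural observation, already noted in the excerpt, is that the leading ones of the best-so-far solution $x^*$ can never be lost, so once $\LO(x^*)=k$ the pheromones on the first $k$ bits increase monotonically toward the upper border $1-1/n$. First I would fix a level $k$ (meaning $\LO(x^*)=k$ but bit $k+1$ is $0$) and ask for the probability of a strict improvement, which requires setting bit $k+1$ to $1$ while preserving the first $k$ ones. The probability of preserving the prefix is the product of the pheromone values on those bits, and the probability of flipping bit $k+1$ is its $1$-edge pheromone, which is at least $1/n$. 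The improving event has probability $\Omega\!\left(\prod_{i\le k}\tau(e_{i,1})\cdot \tfrac1n\right)$, and once the prefix pheromones are saturated (each at $1-1/n$) this is $\Omega(1/(en))$, giving $O(n)$ expected steps per level once saturation holds.

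Next I would account for the freezing/saturation cost. By Lemma~\ref{lem:freezing}, after $(\ln n)/\rho$ iterations in which the prefix bits of $x^*$ stay fixed, those pheromones reach their borders; summing the naive $(\ln n)/\rho + O(n)$ over all $n$ levels already yields a bound of the shape $O\!\left(n^2/\rho\right)$, which is too weak. The improvement to $O(n^2 + n/\rho)$ for \MMAS comes from amortizing the freezing cost: the pheromone on a given prefix bit only has to be driven up \emph{once} over the whole run, since leading ones are never lost and the pheromone there never decreases. So the total pheromone-raising work is not $n$ separate freezing phases of length $(\ln n)/\rho$ but a single cumulative process across all bits. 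The hard part will be making this amortization rigorous: I would track, for each prefix bit, the first time it is permanently fixed to $1$, and argue that the expected waiting time for an improvement at level $k$ can be charged against the partial (not full) saturation of bit $k+1$, so that the $1/\rho$ factor is paid roughly $n$ times in total rather than $n$ times \emph{per level}. A drift or potential argument on the weighted prefix pheromone sum, in the spirit of Lemma~\ref{lem:MultiStepPheromoneDrift}, is the natural tool: one shows the potential rises by $\Omega(\rho)$ per step until an improvement becomes likely, yielding the additive $O(n/\rho)$ rather than a multiplicative one.

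The refined bound $O\!\left(n^2(1/\rho)^\varepsilon + \tfrac{n/\rho}{\log(1/\rho)}\right)$ for \MMASs requires sharpening the per-level improvement probability in the \emph{unsaturated} regime rather than waiting for full saturation. The idea I would pursue is that a prefix bit need not reach $1-1/n$ for an improvement to be reasonably likely; even partially raised pheromones give a constant success probability once they exceed, say, $1-1/n^{\,c}$ for a suitable constant, and this partial threshold is reached after only $(\varepsilon\ln n)/\rho$ or a $\log(1/\rho)$-factor fewer iterations than full freezing. Trading off the residual error in the prefix-preservation probability against the reduced waiting time is exactly where the $(1/\rho)^\varepsilon$ and the $\log(1/\rho)$ denominator enter. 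The main obstacle is this trade-off: bounding the improvement probability as a function of the current (unsaturated) prefix pheromones, and choosing the partial-saturation threshold to optimize the sum over levels. I would expect the cleanest route to be an explicit lower bound on $\prod_{i\le k}\tau(e_{i,1})$ as a function of the number of reinforcing iterations, substituted into a level-by-level summation, with the $\varepsilon$ arising from how aggressively one truncates the saturation phase.
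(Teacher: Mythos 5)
Your high-level plan---fitness levels over the $n+1$ values of \LO, the observation that leading ones are never lost so the prefix pheromones only increase, and the need to avoid paying a full freezing phase of $(\ln n)/\rho$ on every level---matches the intent of the cited proof. The gap is precisely the step you flag as the main obstacle: lower-bounding the probability of rediscovering the prefix when its pheromones are only \emph{partially} saturated. Your concrete suggestion, waiting until every prefix pheromone exceeds a uniform threshold such as $1-1/n^{c}$ (reached after roughly $(\varepsilon\ln n)/\rho$ reinforcements), does not deliver the claimed bounds: paying $\Theta((\log n)/\rho)$ per level over $n$ levels yields $\Theta((n\log n)/\rho)$, not the additive $O(n/\rho)$, so the logarithmic factor you are trying to amortize away reappears. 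The ``charge the waiting time against partial saturation of bit $k+1$'' amortization is also not sufficient on its own, because the quantity that must be controlled is the \emph{product} $\prod_{i\le k}\tau(e_{i,1})$, and the most recently acquired leading ones necessarily still have low pheromone regardless of how the cost of the older ones is accounted for.

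The missing device is the $(i,\ell)$-layer structure together with Lemma~\ref{lem:rediscover-layer} (taken from the proof of Theorem~6 of the cited work): order the $k$ leading bits by increasing pheromone and maintain the invariant that the $j$-th of them satisfies $\tau_j \ge \min\left(1-1/n,\ 1-(1-\rho)^{j\ell}\right)$, which holds because the $j$-th ``oldest'' leading one has been reinforced for at least $j\ell$ iterations. The crucial point is that the product $\prod_{j=1}^{k}\left(1-(1-\rho)^{j\ell}\right)$ is bounded below by $\Omega(e^{-5/(\ell\rho)})$, a quantity independent of $n$ and $k$, because the deficits $(1-\rho)^{j\ell}$ decay geometrically in~$j$. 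Each fitness level then costs expected time $O(n\, e^{5/(\ell\rho)}+\ell)$ (the waiting time for an improvement under the layer invariant, plus $\ell$ iterations to restore the invariant for the new leading one), and summing over $n$ levels with $\ell=\lceil 5/\rho\rceil$ gives $O(n^2+n/\rho)$, while $\ell=\lceil 5/(\varepsilon\rho\ln(1/\rho))\rceil$ gives the second bound. Your intuition about the trade-off that produces $(1/\rho)^{\varepsilon}$ and the $\log(1/\rho)$ denominator is correct, but without the layered product bound the argument does not close.
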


The basic proof idea is that after an average waiting time of $\ell$ iterations the probability of rediscovering the leading ones in $x^*$ is at least $\Omega(e^{-5/(\ell\rho)})$. Plugging in appropriate values for $\ell$ then gives the claimed bounds.

\begin{definition}
For $\ell \in \N$ and a sequence of bits $x_1, \dots, x_i$ ordered with respect to increasing pheromones we say that these bits form an $(i, \ell)$-layer if for all $1 \le j \le i$
\[
\tau_j \ge \min(1-1/n, 1-(1-\rho)^{j\ell}).
\]
\end{definition}

With such a layering the considered bits can be rediscovered easily, depending on the value of~$\ell$.
The following lemma was implicitly shown in~\cite[proof of Theorem~6]{DBLP:journals/swarm/NeumannSW09}. 
\begin{lemma}
\label{lem:rediscover-layer}
The probability that in an $(i, \ell)$-layer all $i$ bits defining the layer are set to 1 in an ant solution is $\Omega(e^{-5/(\ell\rho)})$.
\end{lemma}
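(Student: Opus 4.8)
The plan is to compute the target probability directly as a product of pheromone values. When the ant constructs a solution, the decision at each bit is made independently, and bit $j$ is set to $1$ with probability exactly equal to its pheromone $\tau_j$ (recall that pheromones and traversal probabilities coincide here). Hence the probability that all $i$ layer bits are set to $1$ is $\prod_{j=1}^{i}\tau_j$, and by the definition of an $(i,\ell)$-layer this is at least
\[
\prod_{j=1}^{i}\min\!\left(1-\tfrac{1}{n},\ 1-(1-\rho)^{j\ell}\right).
\]
My goal is then to show that this product is $\Omega(e^{-5/(\ell\rho)})$.

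First I would split the factors according to which argument attains the minimum. Since $(1-\rho)^{j\ell}$ is decreasing in $j$, the indices where the cap $1-1/n$ is active form a suffix of $\{1,\dots,i\}$; there are at most $i\le n$ of them, so their combined contribution is at least $(1-1/n)^{n}=\Omega(1)$. The remaining indices form a prefix $\{1,\dots,m\}$ and contribute $\prod_{j=1}^{m}\bigl(1-(1-\rho)^{j\ell}\bigr)\ge \prod_{j=1}^{\infty}\bigl(1-(1-\rho)^{j\ell}\bigr)$, because extending a product of factors lying in $(0,1]$ to infinitely many factors only decreases it. It therefore remains to prove $\prod_{j=1}^{\infty}\bigl(1-(1-\rho)^{j\ell}\bigr)\ge e^{-5/(\ell\rho)}$.

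Next I would take logarithms and bound the sum by an integral. Using $(1-\rho)^{j\ell}\le e^{-j\ell\rho}$ gives
\[
\sum_{j=1}^{\infty}\ln\!\bigl(1-(1-\rho)^{j\ell}\bigr)\ \ge\ \sum_{j=1}^{\infty}\ln\!\bigl(1-e^{-j\ell\rho}\bigr).
\]
The function $g(u)=\ln(1-e^{-\ell\rho u})$ is negative and increasing in $u$, so $\int_{j-1}^{j}g(u)\,du\le g(j)$, and summing yields $\sum_{j=1}^{\infty}g(j)\ge\int_{0}^{\infty}g(u)\,du$. Substituting $w=\ell\rho u$ and using the standard evaluation $\int_{0}^{\infty}\ln(1-e^{-w})\,dw=-\pi^2/6$, this integral equals $-\pi^2/(6\ell\rho)$. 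Since $\pi^2/6<5$, we obtain $\sum_{j}\ln(1-(1-\rho)^{j\ell})\ge -5/(\ell\rho)$, hence $\prod_{j=1}^{\infty}\bigl(1-(1-\rho)^{j\ell}\bigr)\ge e^{-5/(\ell\rho)}$. Combined with the $\Omega(1)$ contribution of the capped factors, this establishes the lemma.

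The main obstacle is pinning down the constant in the exponent. A naive estimate such as $\ln(1-x)\ge -x/(1-x)$ overshoots by a logarithmic factor and gives a bound of the wrong order, so the argument genuinely needs the integral comparison together with the exact value $\int_{0}^{\infty}\ln(1-e^{-w})\,dw=-\pi^2/6\approx-1.645$. This is precisely the slack that the statement's constant $5$ is designed to accommodate, and it explains why the bound is phrased with that particular constant rather than a smaller one.
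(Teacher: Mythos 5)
Your proof is correct. Note, however, that the paper does not actually prove Lemma~\ref{lem:rediscover-layer} at all: it only remarks that the statement ``was implicitly shown'' in the proof of Theorem~6 of~\cite{DBLP:journals/swarm/NeumannSW09}, so there is no in-paper argument to compare against line by line. Your self-contained derivation is sound: the probability is $\prod_{j=1}^i \tau_j$, the capped factors form a suffix contributing $(1-1/n)^n = \Omega(1)$, the remaining prefix is bounded below by the infinite product $\prod_{j\ge 1}\bigl(1-(1-\rho)^{j\ell}\bigr)$, and the monotone integral comparison combined with $\int_0^\infty \ln(1-e^{-w})\,dw = -\pi^2/6$ gives the exponent $-\pi^2/(6\ell\rho) \ge -5/(\ell\rho)$. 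Your closing observation is also apt: the crude bound $\ln(1-x)\ge -x/(1-x)$ applied termwise picks up a harmonic sum and hence an extra $\log(1/(\ell\rho))$ factor in the exponent, so some finer device (your integral comparison, or the more piecemeal case analysis underlying the constant $5$ in the cited work) is genuinely needed. In fact your argument yields the stronger constant $\pi^2/6 < 2$ in place of $5$, so it slightly sharpens the lemma while remaining compatible with how it is used in Theorem~\ref{the:binval-upper}.
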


Assume we have $k = \LO(x^*)$ and the pheromones form a $(k, \ell)$-layer. Using Lemma~\ref{lem:rediscover-layer} and the fact that a new leading one is added with probability at least $1/n$, the expected waiting time until we have $\LO(x^*) \ge k+1$ and a $(k+1, \ell)$-layer of pheromones is at most $O(n \cdot e^{5/(\ell\rho)} + \ell)$. As this is necessary at most $n$ times, this gives us an upper bound on the expected optimization time.
Plugging in $\ell = \lceil5/\rho\rceil$ and $\ell = \lceil5/(\varepsilon \rho \ln(1/\rho))\rceil$ yields the same upper bounds for \BV as we had in Theorem~\ref{the:lo-upper} for \LO.
\begin{theorem}
\label{the:binval-upper}
  The expected optimization time of \MMAS and \MMASs on \BV is bounded
  by $O(n^2 + n/\rho)$ and $\mathord{O}\mathord{\left(n^2 \cdot
      (1/\rho)^\varepsilon + \frac{n/\rho}{\log(1/\rho)}\right)}$ for
  every constant $\varepsilon > 0$.
\end{theorem}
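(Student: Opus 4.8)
The plan is to reduce the analysis of \BV entirely to the behavior of its leading ones, exploiting that \BV orders solutions exactly by the binary value of the string. First I would record two structural facts. Since the weight $w_i = 2^{n-i}$ of bit $i$ exceeds the sum of all lower weights, flipping any leading one of the best-so-far $x^*$ to $0$ strictly decreases the fitness; hence, exactly as for \LO, the number $\LO(x^*)$ of leading ones is non-decreasing throughout the run and the $1$-edge pheromones of these bits are reinforced in every iteration. Moreover, \BV is injective, so no two distinct search points share a fitness value and \MMAS and \MMASs coincide on \BV; it therefore suffices to analyze a single process. I would then divide the run into phases indexed by $k = \LO(x^*)$, declaring Phase~$k$ complete once $\LO(x^*) \ge k+1$ and the leading-one pheromones form a $(k+1,\ell)$-layer, for a parameter $\ell$ fixed only at the very end.

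For the per-phase bound I would follow the template of Theorem~\ref{the:lo-upper}. Suppose we are in Phase~$k$ with a $(k,\ell)$-layer. A single ant step adds a leading one whenever it re-samples the current $k$ leading ones as $1$ and sets bit $k+1$ to $1$. By Lemma~\ref{lem:rediscover-layer} the first event has probability $\Omega(e^{-5/(\ell\rho)})$, and since every pheromone is at least $1/n$ the second has probability at least $1/n$; multiplying, the per-step success probability is $\Omega(e^{-5/(\ell\rho)}/n)$, so the expected time to gain the $(k+1)$-st leading one is $O(n\, e^{5/(\ell\rho)})$. Once bit $k+1$ is fixed, $\LO(x^*)$ cannot drop, so all $k+1$ leading ones are reinforced for the following $\ell$ iterations, which I claim re-establishes a $(k+1,\ell)$-layer. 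Summing the resulting per-phase cost $O(n\, e^{5/(\ell\rho)} + \ell)$ over the at most $n$ phases gives a total of $O(n^2 e^{5/(\ell\rho)} + n\ell)$.

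The step I expect to require the most care—and which is only implicit in the cited proof—is verifying that a $(k+1,\ell)$-layer is genuinely re-formed after the $\ell$ reinforcing iterations. The clean invariant is that one reinforcement maps a pheromone gap $1-\tau$ to $(1-\rho)(1-\tau)$, so $\ell$ consecutive reinforcements shrink the gap by a factor $(1-\rho)^\ell$ (up to the cap at $1-1/n$). Arguing by induction on the phase, the $(k,\ell)$-layer gives the bit of $j$-th smallest pheromone a gap at most $(1-\rho)^{j\ell}$, which after $\ell$ more reinforcements becomes at most $(1-\rho)^{(j+1)\ell}$, while the freshly fixed bit $k+1$, reinforced $\ell$ times, has gap at most $(1-\rho)^\ell$. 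The $k+1$ gaps are thus bounded by the decreasing sequence $(1-\rho)^\ell, (1-\rho)^{2\ell}, \dots, (1-\rho)^{(k+1)\ell}$, and a short majorization argument shows that after sorting the bits by increasing pheromone the rank-$m$ gap is at most $(1-\rho)^{m\ell}$, i.e. $\tau_m \ge 1-(1-\rho)^{m\ell}$, which is precisely the $(k+1,\ell)$-layer condition. A minor point, absorbed into the first phase, is forming the initial layer from the uniform start $\tau \equiv 1/2$.

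Finally, the two stated bounds follow by optimizing $\ell$ in $O(n^2 e^{5/(\ell\rho)} + n\ell)$, which is routine. Taking $\ell = \lceil 5/\rho\rceil$ makes $e^{5/(\ell\rho)} = O(1)$ and $n\ell = O(n/\rho)$, yielding $O(n^2 + n/\rho)$. Taking $\ell = \lceil 5/(\varepsilon\rho\log(1/\rho))\rceil$ gives $e^{5/(\ell\rho)} = (1/\rho)^{\varepsilon}$ and $n\ell = O\bigl((n/\rho)/\log(1/\rho)\bigr)$, yielding the second bound $O\bigl(n^2 (1/\rho)^\varepsilon + \tfrac{n/\rho}{\log(1/\rho)}\bigr)$, exactly matching the \LO bounds of Theorem~\ref{the:lo-upper}.
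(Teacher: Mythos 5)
Your proposal is correct and follows essentially the same route as the paper: exploit that \BV's dominating weights make the leading ones of $x^*$ unlosable, run the $(k,\ell)$-layer argument from the \LO analysis with per-phase cost $O(n\,e^{5/(\ell\rho)}+\ell)$ over at most $n$ phases, and plug in the two choices of $\ell$. The extra details you supply (re-establishing the layer after $\ell$ reinforcements, and the observation that \BV's injectivity makes \MMAS and \MMASs coincide) are consistent refinements of the paper's argument rather than a different approach.
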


These two bounds show that the second term ``$+n/\rho$'' in the first bound---that also appeared in the upper bound for \ONEMAX---can be lowered, at the expense of an increase in the first term. It is an interesting open question whether for all linear functions when $\rho$ is very small the runtime is $o(n/\rho)$, \ie, sublinear in $1/\rho$ for fixed $n$. The relation between the runtime and $\rho$ is further discussed from an experimental perspective in the next section.

\section{Experiments}
\label{sec:exp}
\ignore{
\remark{this section has to be updated, depending on what are the final theoretical results from chapter 4}

In previous sections we have shown a runtime bound of $O((n^3 \log n) / \rho)$. However, for $\rho = 1$, the behavior of \MMASs equals that of the \oneoneeas, for which a runtime bound of $O(n \log n)$ is known. Hence, we conducted experiments for variable $n$ and $\rho$, which indicate that the expected optimization time of the \MMAS on linear functions is $O((n \log n) / \rho)$.
The experimental results are shown in Figures~\ref{fig:experiments-mmass} and~\ref{fig:experiments-mmas}.
}

In this section, we investigate the behavior of our algorithms using experimental studies. Our goal is to examine the effect of the pheromone update strength as well as the impact of the weights of the linear function that should be optimized.

Related to our experiments are those in \cite{doerr2008, DBLP:journals/swarm/NeumannSW09}. The authors of the first article concentrate their analyses of \oneant and \MMAS on \ONEMAX, \LO, and random linear functions. A single $n$ for each function is used, and a number of theory-guided indicators monitors the algorithms' progress at every time step, in order to measure the algorithms' progress within individual runs. In the second article, the runtime of \MMAS and \MMASs is investigated on \ONEMAX and other functions, for two values of $n$ and a wide range of values of $\rho$.


\begin{figure*}[!htp]
\begin{center}
\hspace*{-20mm}
\begin{tabular}{cc}
\includegraphics[width=8cm]{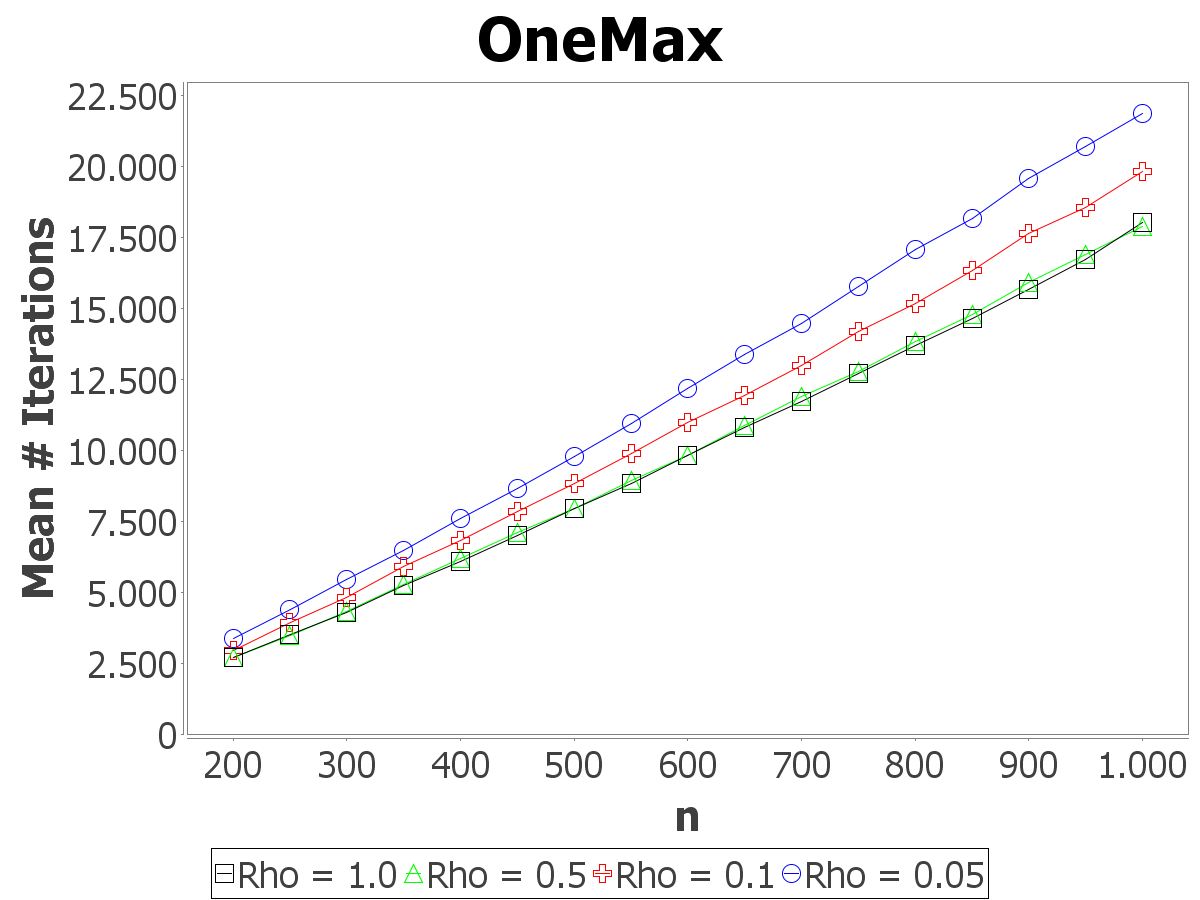} & \includegraphics[width=8cm]{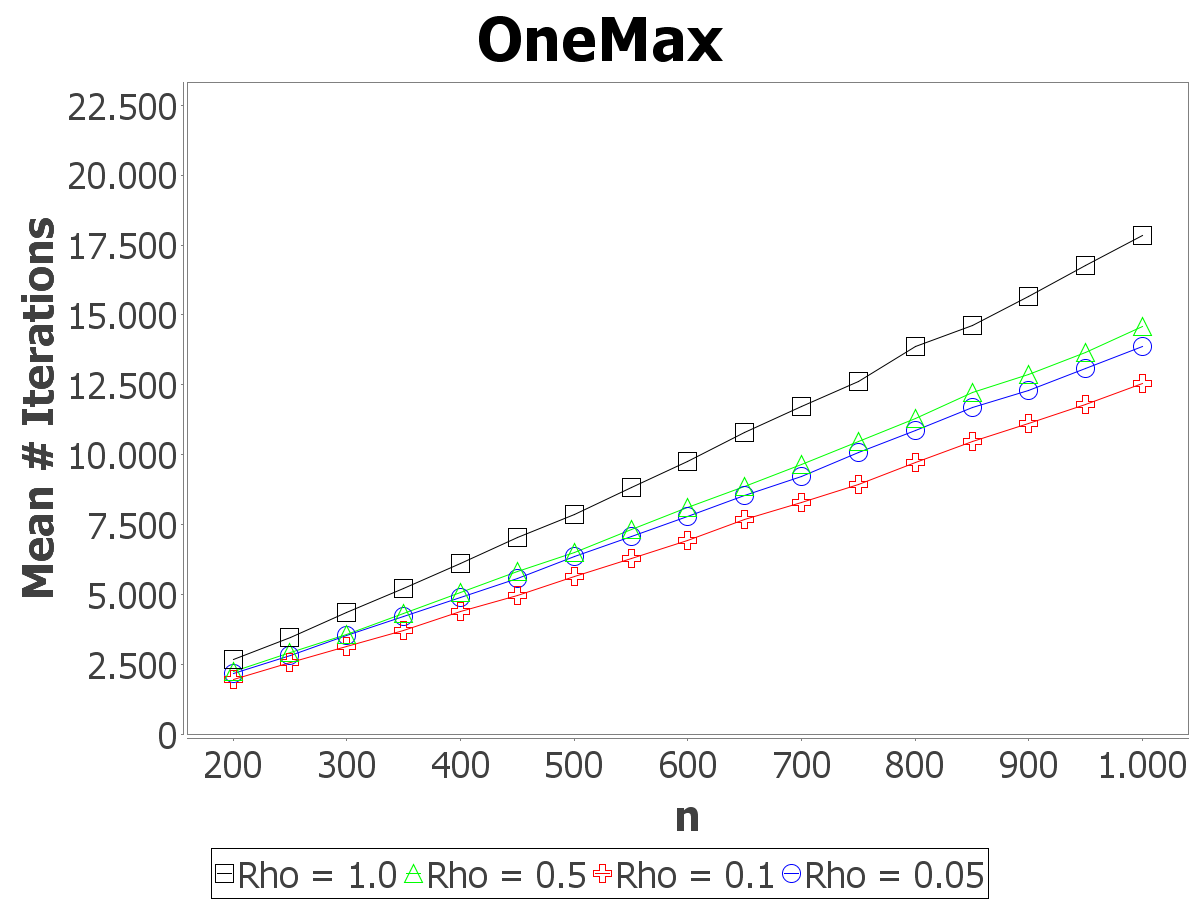} \\
\includegraphics[width=8cm]{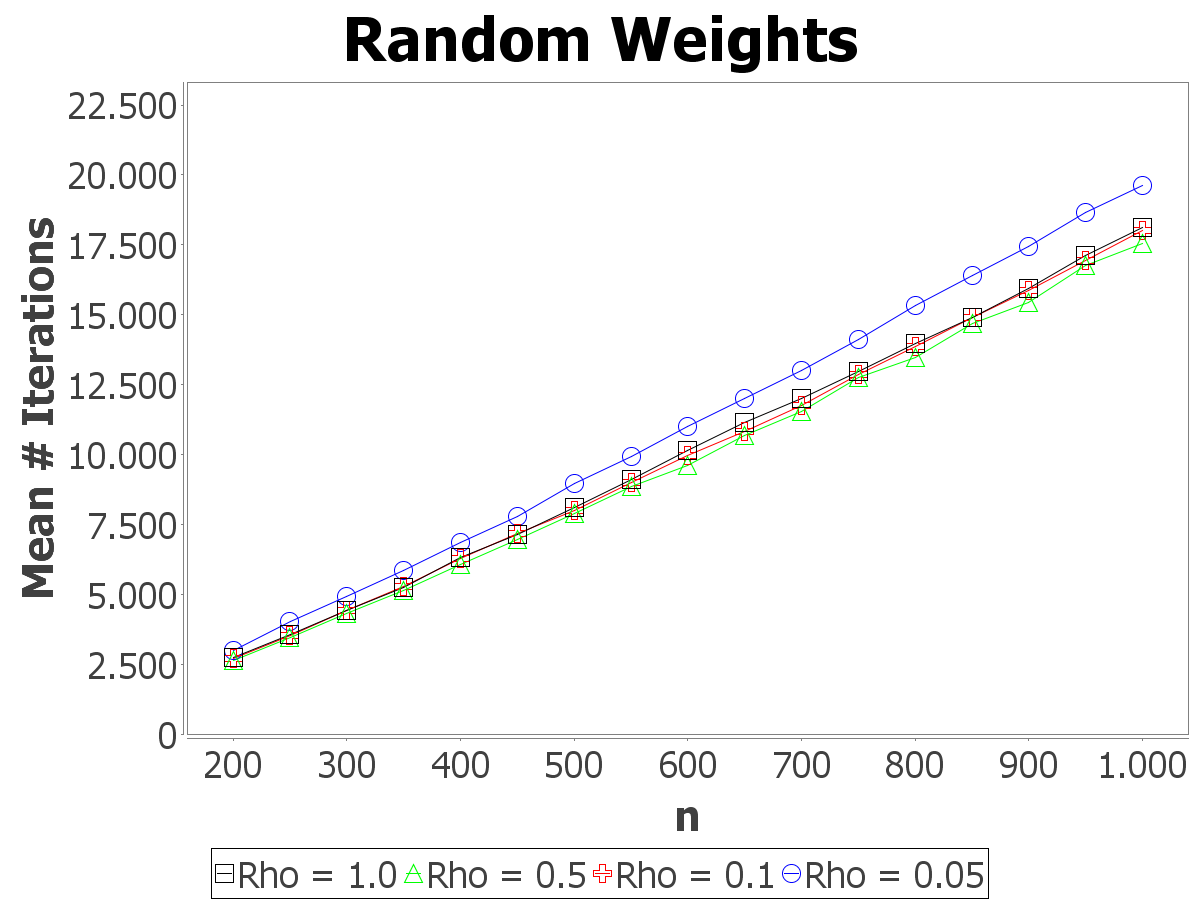} & \includegraphics[width=8cm]{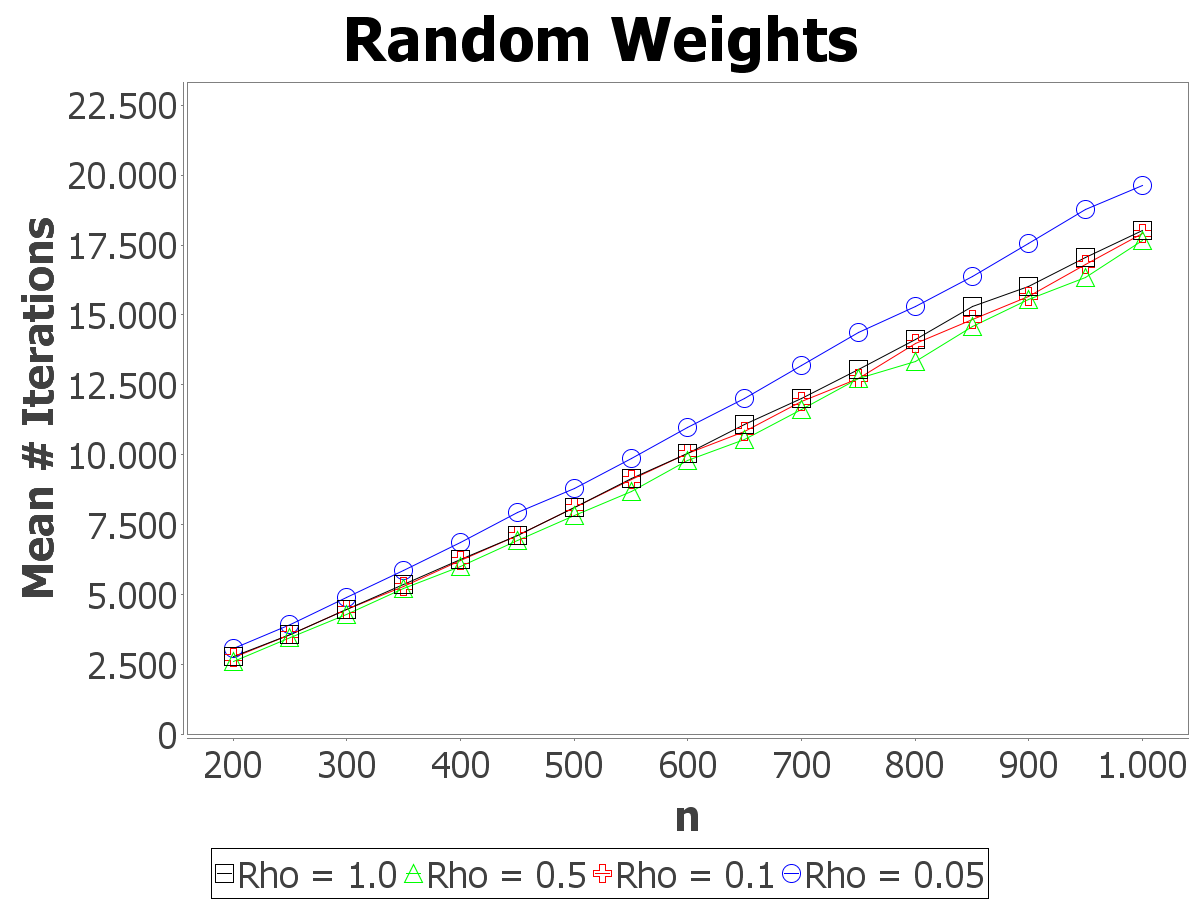} \\
\includegraphics[width=8cm]{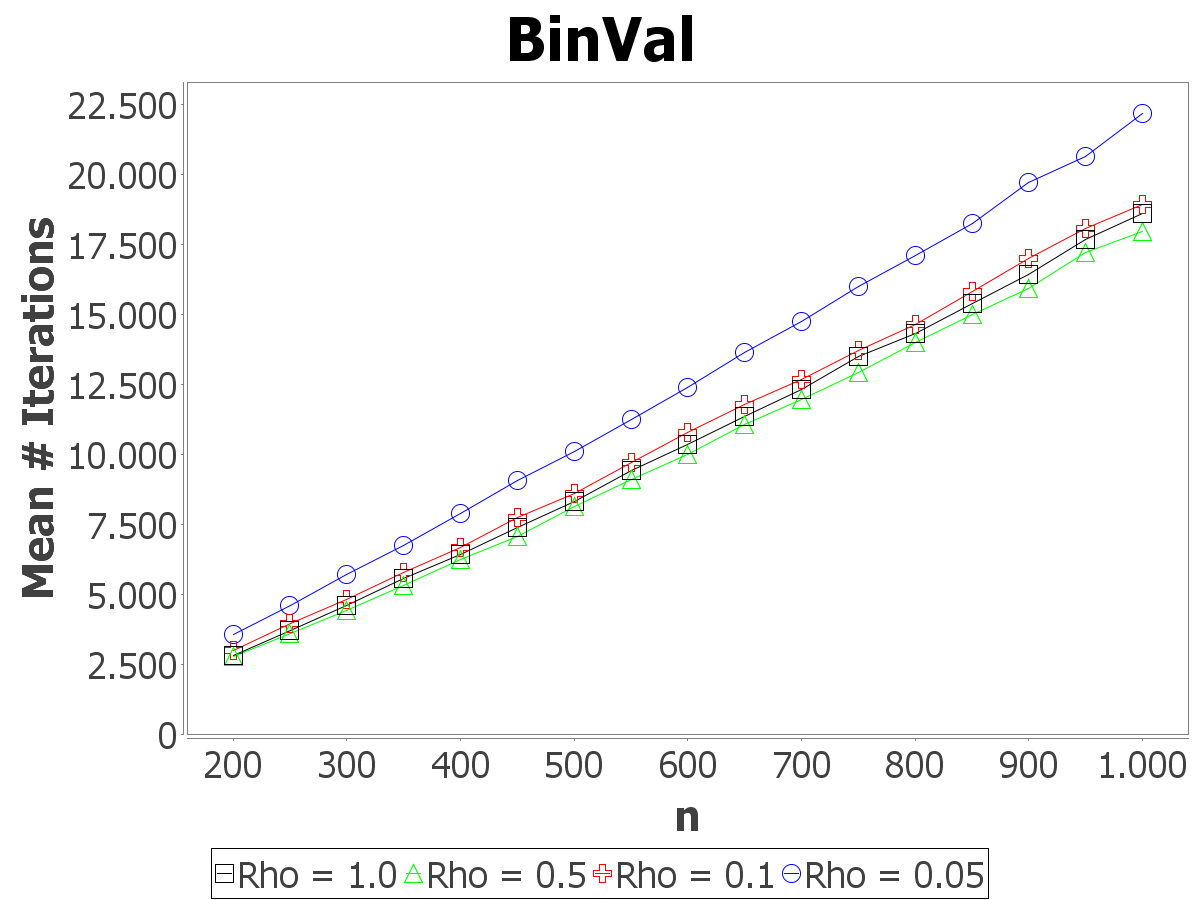} & \includegraphics[width=8cm]{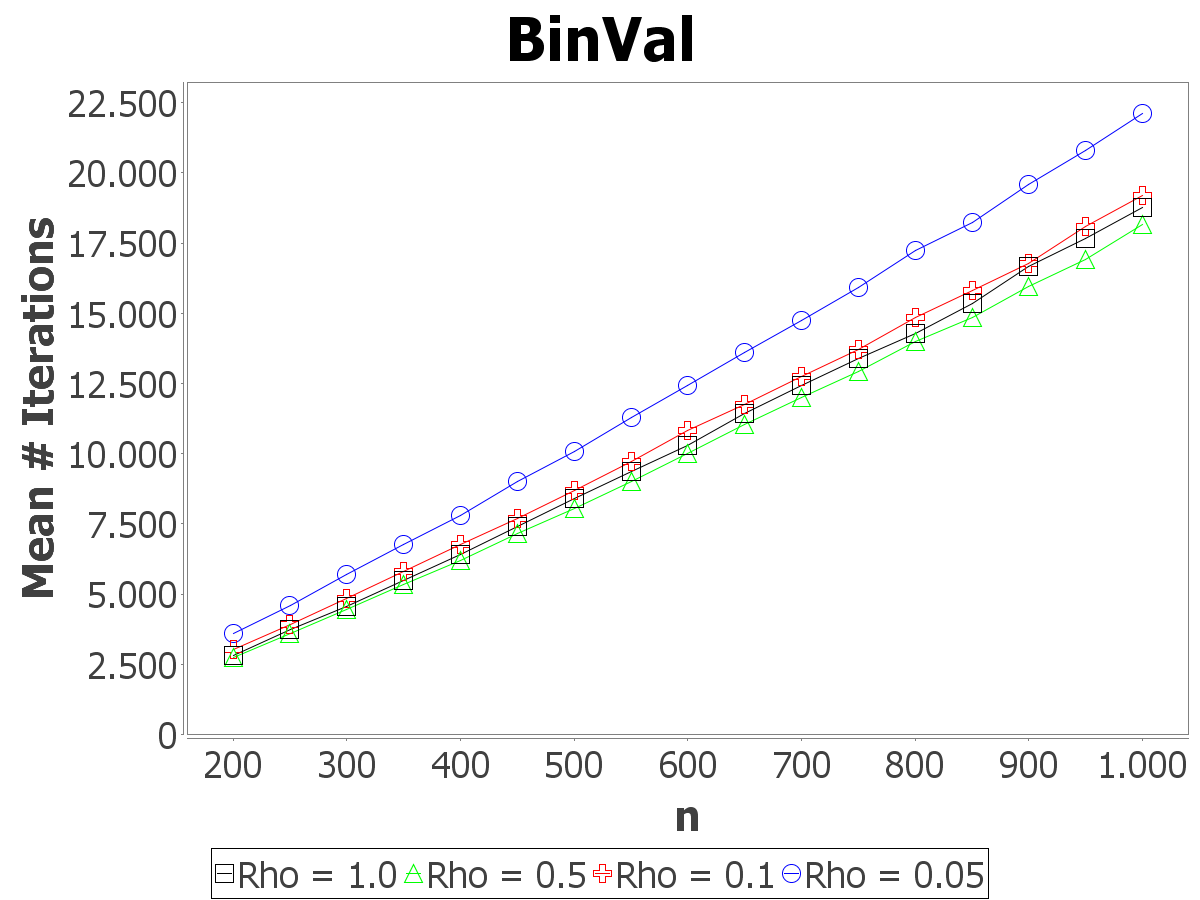}
\end{tabular}
\end{center}

\caption{Runtime of \MMASs (left column) and \MMAS (right column).}
\label{fig:runtimeN}
\end{figure*}

First, we investigate the runtime behavior of our algorithms for different settings of $\rho$ (see Figure~\ref{fig:runtimeN}).
We investigate \ONEMAX, \BV, and random functions where the weights are chosen uniformly at random from the interval $]0,1]$.
For our investigations, we consider problems of size $n=200, 250, \ldots, 1000$.
Each fixed value of $n$ is run for $\rho \in \{1.0, 0.5, 0.1, 0.05 \}$
and the results are averaged over 1000 runs.
Remember that, for $\rho=1.0$, \MMASs is equivalent to the \oneoneeas, and \MMAS is equivalent to the \oneoneea (see \cite{JWplateau} for the definition of the evolutionary algorithms).

One general observation is that the performance of \MMASs on random linear functions and \BV is practically identical to that of \MMAS. This was expected, as several bit positions have to have the same associated weights in order for \MMAS to benefit from its weakened acceptance condition. Furthermore, we notice that \ONEMAX is not the simplest linear function for \MMASs to optimize. In fact, for certain values of $\rho$, \ONEMAX is as difficult to optimize as \BV.

For all experiments, the performance of \MMASs with $\rho=1.0$ is very close to that of \MMAS with $\rho=1.0$. However, with different values of $\rho$, several performance differences are observed. For example, \MMASs with $\rho=0.5$ and $\rho=0.1$ optimizes random linear functions faster than \MMASs with $\rho=1.0$, which is on the other hand the fastest setup for \ONEMAX. Furthermore, the performance of \MMAS increased significantly with values of $\rho<1.0$, \eg, \MMAS with $\rho=0.1$ is 30\% faster than \MMAS with $\rho=1.0$.
Another general observation is that \MMASs performs better on random linear functions, than on \ONEMAX, \eg for $n=1000$ and $\rho=0.1$ the runtime decreases by roughly 10\%.

In the following, we give an explanation for this behavior.
During the optimization process, it is possible to replace a lightweight 1 at bit $i$ (\ie, a bit with a relatively small associated weight $w_i$) with a heavyweight 0 at bit $j$ (\ie, a bit with a relatively large associated weight $w_j$). Afterwards, during the freezing process, the probability for sampling again the lightweight 1 at bit $i$ (whose associated $\tau_i$ is in the process of being reduced to $\tmin$) is relatively high.
Unlike in the case of \ONEMAX, it is indeed possible for \MMASs to collect heavyweight 1s in between, and the ``knowledge'' of the lightweight 1s is available for a certain time, stored as a linear combination of the pheromones' values.
This effect occurs in the phase of adjusting the pheromones, not when the pheromone values are saturated. Otherwise, the effect could be observed for the \oneoneea and the \oneoneeas as well.

\begin{figure*}[!htp]
\begin{center}
\hspace*{-20mm}
\begin{tabular}{cc}
\includegraphics[width=8cm]{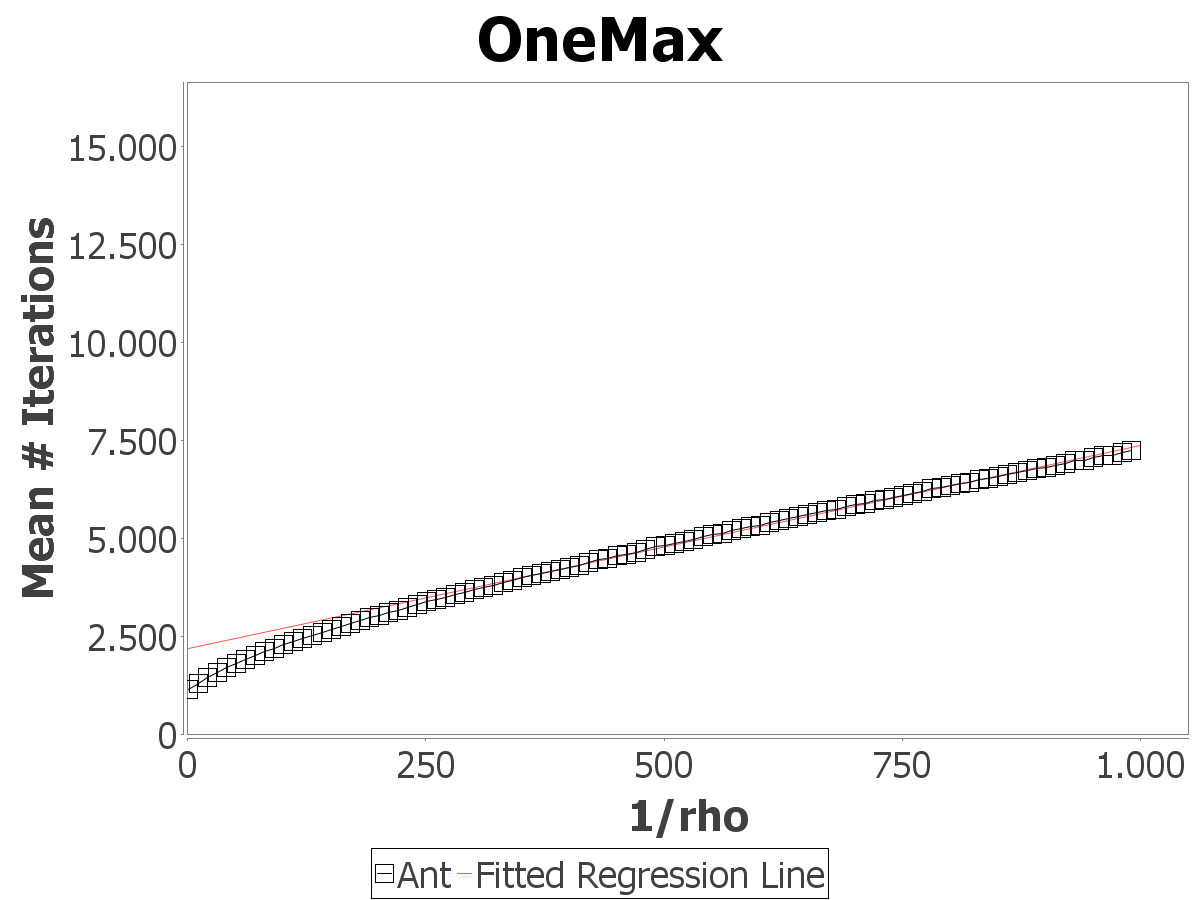} & \includegraphics[width=8cm]{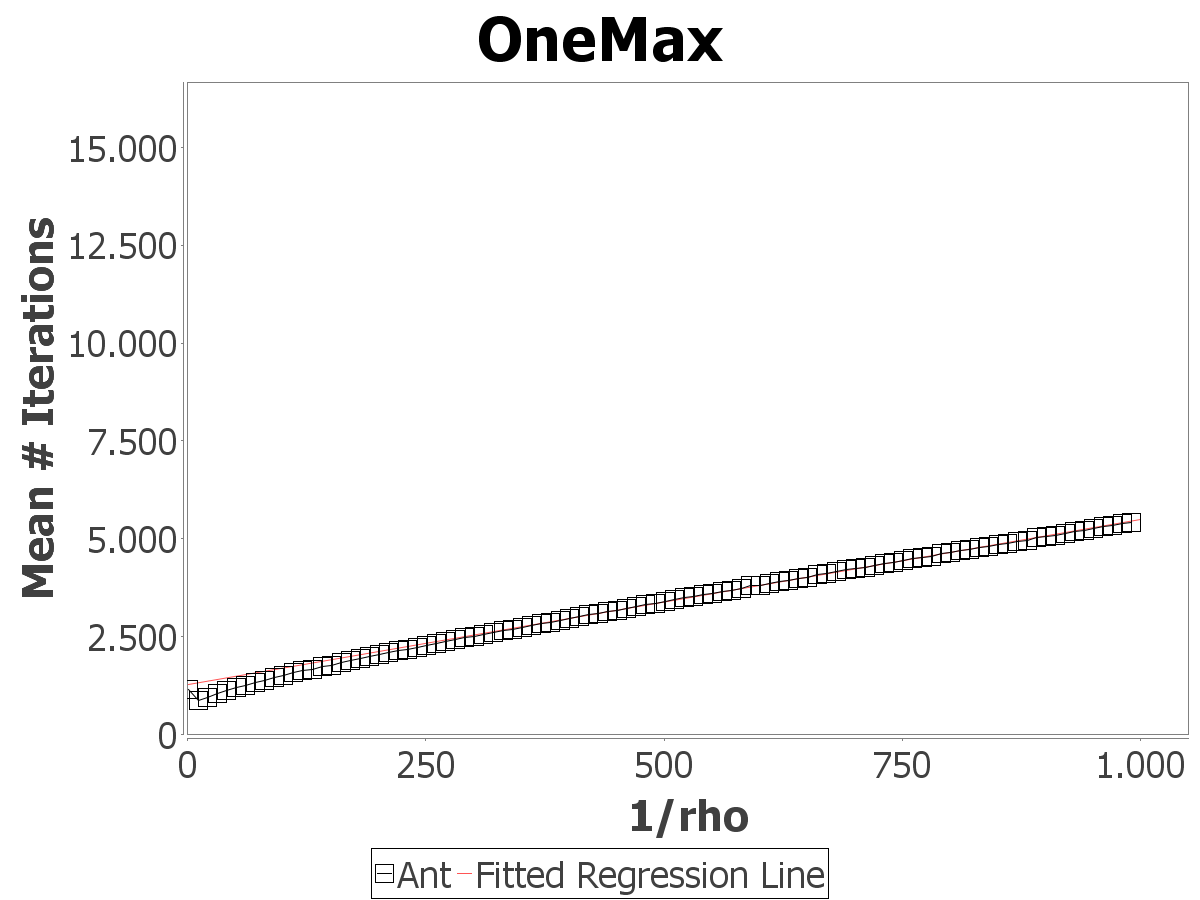} \\
\includegraphics[width=8cm]{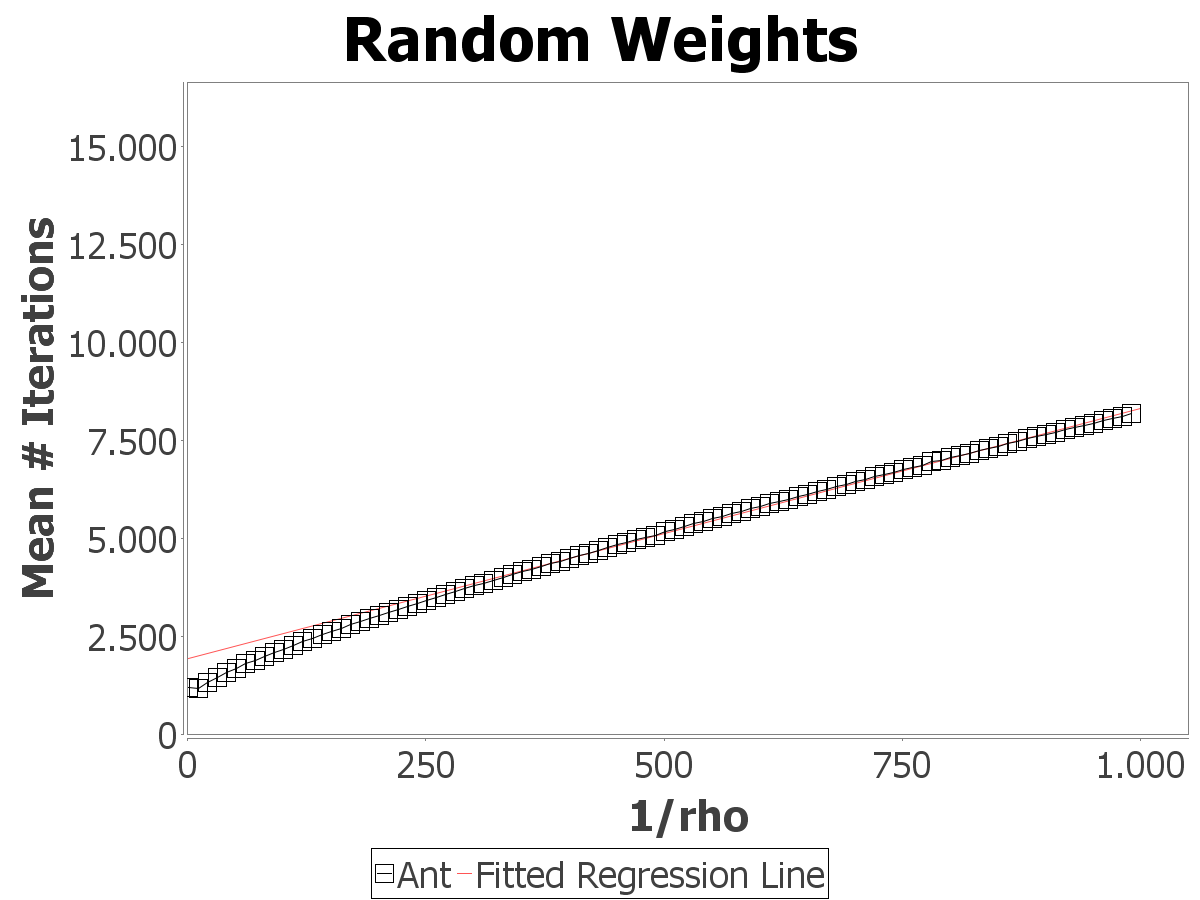} & \includegraphics[width=8cm]{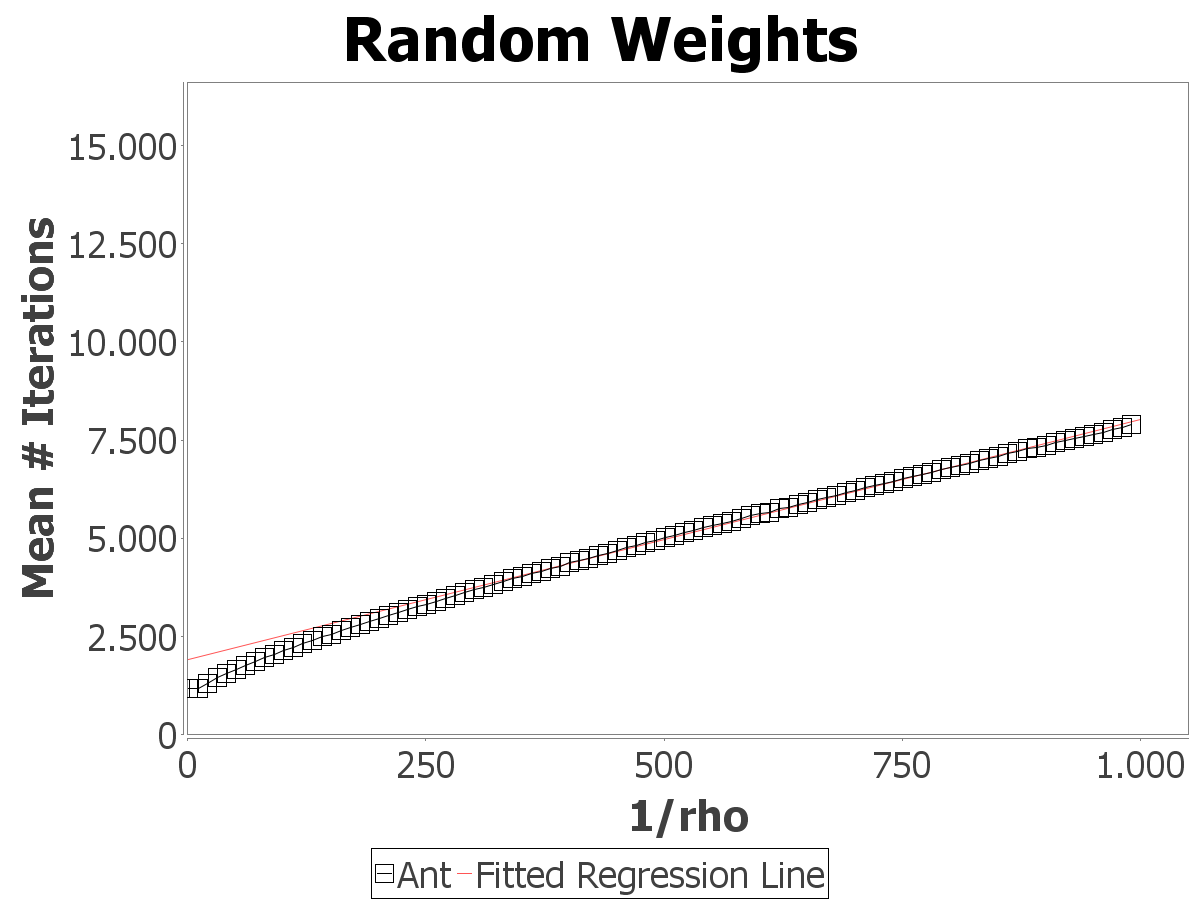} \\
\includegraphics[width=8cm]{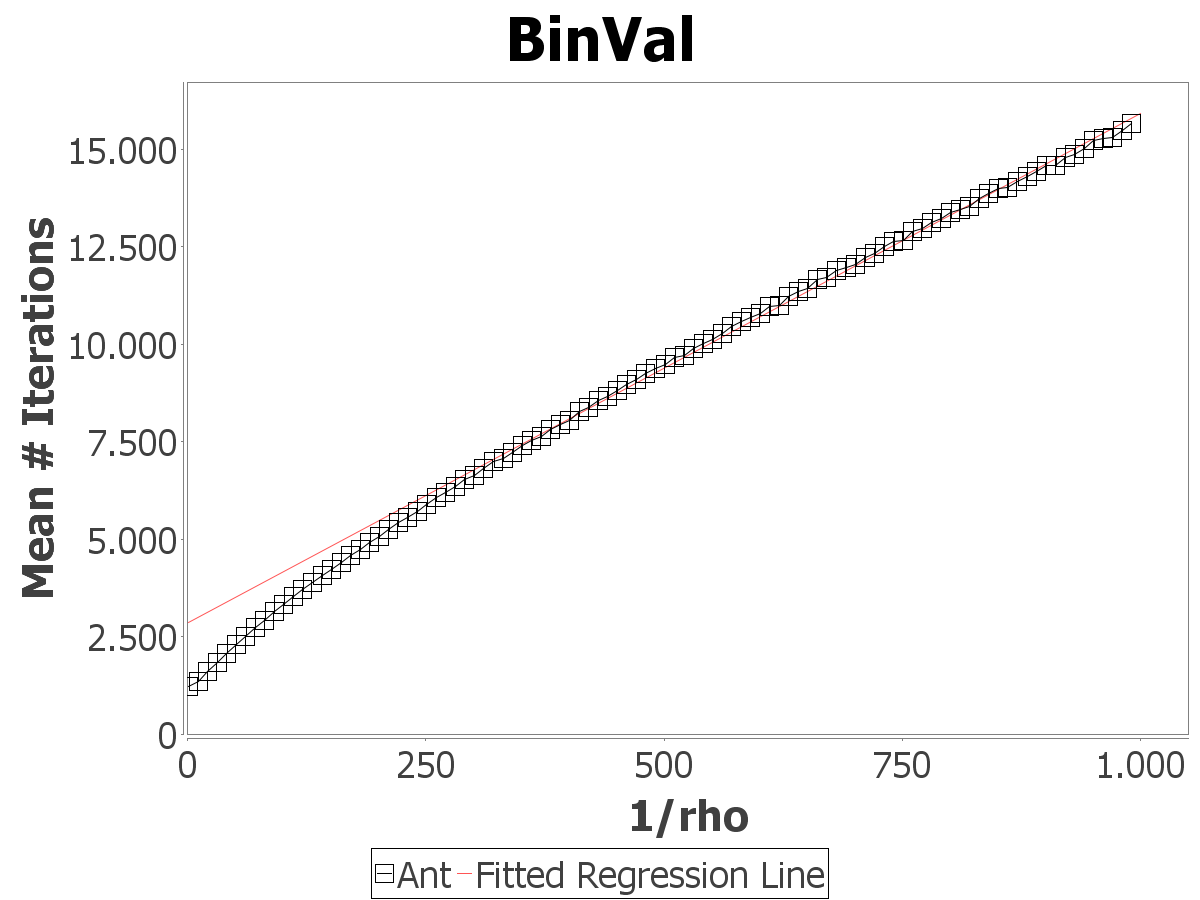} & \includegraphics[width=8cm]{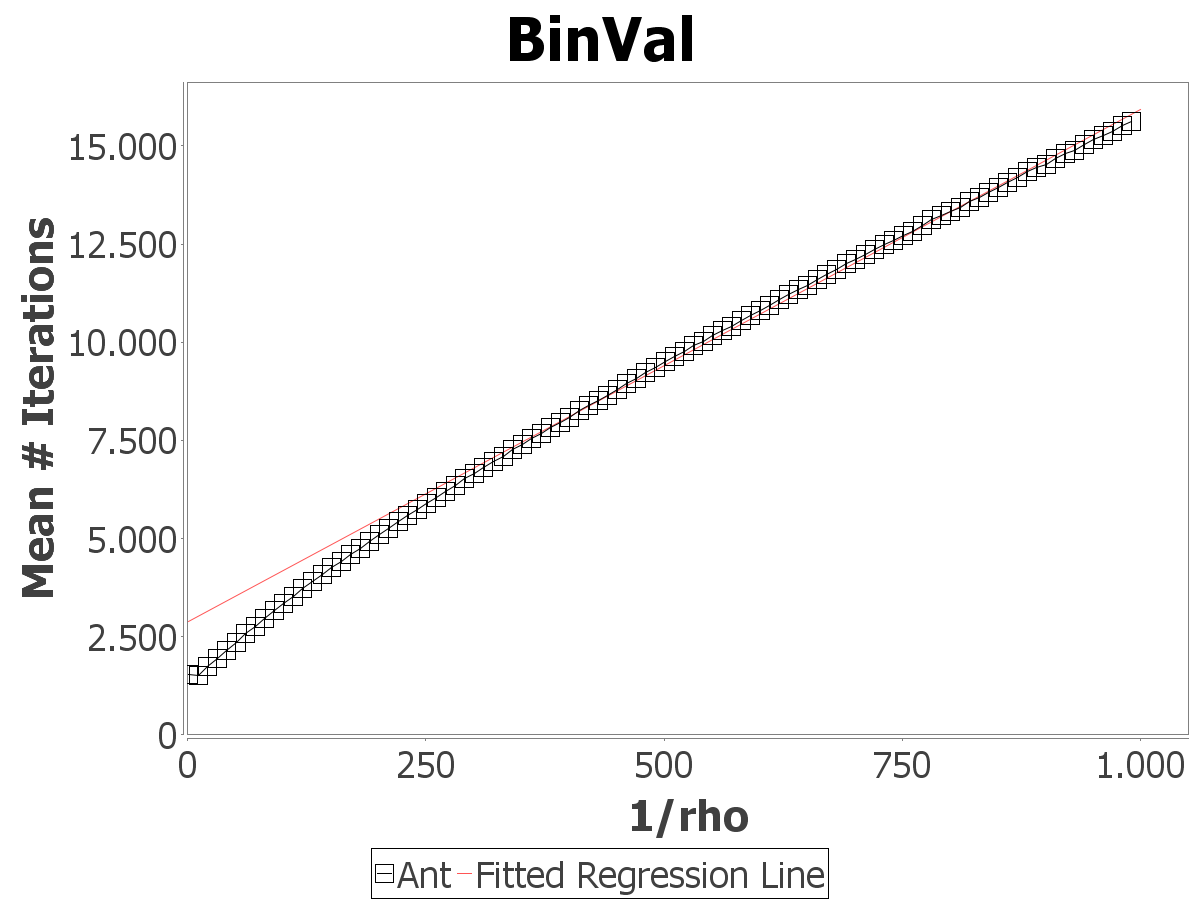}
\end{tabular}
\end{center}
\caption{Impact of pheromone evaporation factor in \MMASs (left column) and \MMAS (right column).}
\label{fig:runtimeRho}
\end{figure*}

We have already seen that the choice of $\rho$ may have a high impact on the optimization time.
The runtime bounds given in this paper increase with decreasing $\rho$.
In the following, we want to investigate the impact of $\rho$ closer by conducting experimental studies. We study the effect of $\rho$ by fixing $n=100$ and varying $\rho = 1/x$ with $x=1, 11, \ldots, 1001$. The results are shown in Figure~\ref{fig:runtimeRho} and are averaged over 10.000 runs.
The effect that small values of $\rho$ can improve the performance on \ONEMAX is observable again. The fitted linear regression lines, which are based on the mean iterations for $1/\rho \in \left(500,1000\right]$, support our claim that the runtime grows at most linear with $1/\rho$ for a fixed value of $n$.
In fact, the fitted lines indicate that the growth of the average runtime is very close to a linear function in $1/\rho$. The real curves appear to be slightly concave, which corresponds to a sublinear growth. However, the observable  effects are too small to allow for general conclusions.

\section{Conclusions and Future Work}

The rigorous analysis of ACO algorithms is a challenging task as these algorithms are of a high stochastic nature. Understanding the pheromone update process and the information that is stored in pheromone during the optimization run plays a key role to increase their theoretical foundations.

We have presented improved upper bounds for the performance of ACO on the class of linear pseudo-Boolean functions.
The general upper bound of $O((n^3 \log n)/\rho)$ from Theorem~\ref{the:general-upper-bound-linear} applies to all linear functions, but in the light of the smaller upper bounds for \ONEMAX and \BV we believe that this bound is still far from optimal. Stronger arguments are needed in order to arrive at a stronger result.

We also have developed novel methods for analyzing ACO algorithms without relying on pheromones freezing at pheromone borders.
Fitness-level arguments on a pheromone level have revealed one possible way of reasoning.
For \ONEMAX this approach, in combination with results from~\cite{Gleser:j:75}, has led to a
bound of $O(n \log n + n/\rho)$, both for \MMAS and \MMASs. This is a major improvement to the previous best known bounds $O((n^3 \log n)/\rho)$ for \MMAS and $O((n \log n)/\rho)$ for \MMASs and it finally closes the gap of size $n^2$ between the upper bounds for \MMAS and \MMASs.
We conjecture that our improved bound holds for all linear functions, but this is still a challenging open problem.

The experimental results have revealed that a slow adaption of pheromone is beneficial for \MMAS on \ONEMAX as \MMAS was faster than the \oneoneea for all investigated $\rho$-values not larger than $0.5$. We also argued why \MMASs is faster on random-weight linear functions than on \ONEMAX. The experiments also gave more detailed insights into the impact of the evaporation factor on the average runtime.

We conclude with open questions and tasks for future work:
\begin{enumerate}
\item Do \MMAS and \MMASs optimize all linear pseudo-Boolean functions in expected time $O(n \log n + n/\rho)$?
\item Analyze ACO for combinatorial problems like minimum spanning trees and the TSP in settings with slow pheromone adaptation.
\end{enumerate}

\end{document}